\newcommand{\mR}{\mathbb{R}}
\newcommand{\inner}[2]{{\langle #1 \; , \; #2 \rangle}}
\renewcommand{\norm}[1]{{	\lVert #1 \rVert}}
\newcommand{\phis}{{\{\phi_j\}_{j=1}^n}}
\newcommand{\etf}{{ELMES}}
\newcommand{\etfs}{{ELMES }}
\definecolor{myred}{HTML}{000000}
\definecolor{myblue}{HTML}{0277bd}
\definecolor{modelblue}{HTML}{1034A6}
\definecolor{urlorange}{HTML}{1034A6}
\definecolor{LightRed}{rgb}{.99,.5,.5}
\definecolor{LightGreen}{rgb}{.5,.99,.5}
\newcommand{\checkvalue}[1]{
  \IfDecimal{#1}{
  \ifdim #1pt<0pt
    \FPset\inp{#1}
    \FPeval\oup{min(100,max(0,100*((10+\inp)/10)))}
    \xdef\oup{\oup}
    \cellcolor{white!\oup!LightRed}
    {{\hspace{-5pt}#1\%}}
  \else
    \FPset\inp{#1}
    \FPeval\oup{min(100,max(0,100*((1+\inp)/10)))}
    \xdef\oup{\oup}
    \cellcolor{LightGreen!\oup!white}
    {{\hspace{-5pt}#1\%}}
  \fi
  }
  {#1}
}
\newcommand{\checkfloat}[1]{
  \IfDecimal{#1}{
  \ifdim #1pt<0.65pt
    \FPset\inp{#1}
    \FPeval\oup{min(100,max(0,100*((\inp-.089)/.65)))}
    \xdef\oup{\oup}
    \cellcolor{white!\oup!LightRed}
    {{\hspace{-5pt}{#1}\%}}
  \else
    \FPset\inp{#1}
    \FPeval\oup{min(100,max(0,100*((\inp-0.65)/(1.915-0.65))))}
    \xdef\oup{\oup}
    \cellcolor{LightGreen!\oup!white} 
    {{\hspace{-5pt}{#1}\%}}
  \fi
  }
  {#1}
}
\newcolumntype{C}{>{\collectcell\checkvalue}c<{\endcollectcell}}
\newcolumntype{F}{>{\collectcell\checkfloat}c<{\endcollectcell}}
\newtheorem{lemma}{Lemma}
\newtheorem{proposition}{Proposition}
\newtheorem{theorem}{Theorem}
\newtheorem{definition}{Definition}
\newtheorem{remark}{Remark}
\newtheorem{property-non}{Property}
\newtheorem{assumption}{Assumption}
\definecolor{modelgreen}{HTML}{876ca1}
\newcommand{\emethod}{{\color{modelgreen}CAML}}
\newcommand{\method}{{\color{modelgreen}CAML} }
\definecolor{modelpink}{HTML}{654321}  
\definecolor{green}{HTML}{1a7a0d}
\def\eqref#1{equation~\ref{#1}}
\def\1{\bm{1}}
\DeclareMathAlphabet{\mathsfit}{\encodingdefault}{\sfdefault}{m}{sl}
\SetMathAlphabet{\mathsfit}{bold}{\encodingdefault}{\sfdefault}{bx}{n}
\title{Context-Aware Meta-Learning}
\author{%
  Christopher Fifty$^1$, Dennis Duan$^{1,2}$, Ronald G. Junkins$^1$, \\
  \textbf{Ehsan Amid$^3$, Jure Leskovec$^1$, Christopher Ré$^1$, Sebastian Thrun$^1$}\\
  $^1$Stanford University, $^2$Google, $^3$Google DeepMind\\
  \texttt{fifty@cs.stanford.com} \\
}
\begin{document}

\maketitle

\begin{abstract}

Large Language Models like ChatGPT demonstrate a remarkable capacity to learn new concepts during inference without any fine-tuning. 
However, visual models trained to detect new objects during inference have been unable to replicate this ability, and instead either perform poorly or require meta-training and/or fine-tuning on similar objects.
In this work, we propose a meta-learning algorithm that emulates Large Language Models by learning new visual concepts during inference without fine-tuning. 
Our approach leverages a frozen pre-trained feature extractor, and analogous to in-context learning, recasts visual meta-learning as sequence modeling over datapoints with known labels and a test datapoint with an unknown label.
On \num{8} out of \num{11} few-shot image classification benchmarks, our approach---without meta-training or fine-tuning---exceeds or matches the state-of-the-art algorithm, P>M>F, which is meta-trained on these benchmarks.
Our code is available at \href{https://github.com/cfifty/CAML}{https://github.com/cfifty/CAML}.

\end{abstract}

\section{Introduction}

Meta-learning refers to a capacity to learn new concepts from a small number of demonstrations~\citep{lake2015human}. 
In a decade of remarkable advances to machine intelligence, it remains an area where human performance continues to surpass that of machines~\citep{gpt3}. 
To match human capabilities, and towards developing machines that can learn and think like humans, we must develop machine intelligence capable of learning novel concepts from only a few examples~\citep{lake2017building}. 

Many applications of deep learning apply a learning algorithm to a large set of training data; however, learning from a very small number of training examples poses a challenge~\citep{lake2017building, cnp}. This challenge led to two predominant evaluation settings: \textit{in-domain} and \textit{cross-domain}. The \textit{in-domain} setting evaluates a meta-learner's ability to quickly adapt to new tasks after training on similar tasks within a specific domain. Models designed for this setting are often extremely fast but exhibit poor generalization to tasks outside the target domain~\citep{chen2019closer}. Meanwhile, the \textit{cross-domain} setting evaluates a meta-learner's ability to adapt to tasks in previously unseen domains. Methods designed for this setting are highly adaptable but slow during inference as they require fine-tuning on the support set~\citep{bscd, oh2022understanding, pmf}. Critically, meta-learners in both settings differ from a human's capacity to quickly generalize to new tasks.

The problem of simultaneously fast and general meta-learning has recently been addressed in Natural Language by Large Language Models (LLMs). LLMs like ChatGPT can quickly generalize to new tasks through an ability termed in-context learning~\citep{gpt3}. However, it remains an open problem in Computer Vision. Even the best visual meta-learning algorithms cannot be deployed to a ChatGPT-like system because such systems require models that can (1) generalize to a broad set of tasks unknown at training time and (2) do so in real-time, without the time allowance for finetuning the model. LLMs have shown a remarkable ability to do both; however, current visual meta-learners may only satisfy one requirement or the other~\citep{pmf}.

To measure progress towards this goal of fast and general visual meta-learners, we develop an evaluation paradigm that we call \textit{universal meta-learning}. 
\textit{Universal meta-learning} measures a model's capacity to quickly learn new image classes. It evaluates models across a diverse set of meta-learning benchmarks spanning many different image classification tasks without meta-training on any of the benchmarks' training sets or fine-tuning on the support set during inference.
We focus on the application of few-shot image classification---as opposed to dense prediction tasks like in-painting or segmentation---as the universal setting has already been explored for these applications~\citep{bar2022visual, zhang2023makes, wang2023images, kim2023universal, butoi2023universeg}.

Beyond benchmarking methods in the universal setting, we present a meta-learner that achieves strong universal performance. Drawing inspiration from in-context learning in LLMs, we reformulate $n$-way-$k$-shot image classification as non-causal sequence modeling over the support set and an unknown query image. 
Specifically, given $n$-way classification with $k$-examples from each class, we train a non-causal model over $\{(x_i, y_i)\}_{i=1}^{nk}$ (image, label) support set pairs, and an unlabeled query image $x_{nk + 1}$, to predict the label of the query image. This formulation causes the meta-learner to extrapolate to new classes in its parameter space, enabling it to learn new visual concepts during inference without fine-tuning. Due to its capacity to learn visual information ``in-context'', we term our approach \textit{Context-Aware Meta-Learning} (\emethod). 

In summary, our contribution is two-fold. First, we develop a meta-learning evaluation paradigm that approximates the performance of visual meta-learners in a ChatGPT-like application. Second, we design a meta-learning algorithm that works well in this setting. Our empirical findings show that \method outperforms other meta-learners in the universal setting. 
Remarkably, \emethod's performance in the universal setting often matches---and even exceeds---the in-domain performance of the state-of-the-art meta-learning algorithm, P>M>F~\citep{pmf}, that is directly trained on each down-stream benchmark.

\section{Related Work}
\label{sec:related-work}

{\color{myred} 
\textbf{Meta-Learning as \textit{Causal} Sequence Modeling.} Several of the earliest meta-learning algorithms were formulated as \textit{causal} sequence modeling problems. \citet{hochreiter2001learning} leverage a LSTM~\citep{lstm} to model extensions to semi-linear and quadratic functions, and two decades later, \citet{graves2014neural, santoro2016meta, kaiser2017learning} build upon this approach by integrating a form of external memory that the LSTM can read to and write from memory to develop Neural Turing Machines. 
With the advent of self-attention~\citep{attention}, \citet{snail} predict the labels of query images by first composing a sequence of (image, label) pairs and then feeding it through a stack of interleaved causal self-attention and temporal convolution layers. \citet{gpicl} replaces the stack of interleaved causal self-attention and temporal convolution layers with a Transformer encoder; however, their approach is also causal in the input sequence by composing a sequence of (image, label of previous image) pairs. Both \citet{snail} and \citet{gpicl} are conceptually similar to our work; however, the causal property of both approaches breaks an important symmetry in meta-learning, namely invariance to permutations of the support set~\citep{cnp, pfn}. In \Cref{sec:experiments-results}, we observe a performance gap between both approaches and \method and hypothesize the causal approach actually forces a subtly more difficult modeling problem by imposing a causality inductive bias on a fundamentally non-causal prediction task.
}

\textbf{Cross-Domain Meta-Learning.} Cross-domain meta-learning refers to a challenging evaluation paradigm where the meta-training and inference-time data distributions are significantly different~\citep{chen2019closer}. Recent work finds that leveraging self-supervised pre-training---or foundational model feature extractors---can significantly improve cross-domain performance~\citep{pmf, metaqda}. Moreover, fine-tuning with respect to the support set almost always outperforms meta-learning without fine-tuning in this setting~\citep{bscd, oh2022understanding, phoo2020self, islam2021dynamic}. 
While effective, fine-tuning is prohibitive to deploying visual meta-learning models in a manner similar to LLMs like ChatGPT as the latency and memory cost to fine-tune a model's parameters on each user query is untenable. Accordingly, we propose the universal setting to measure a meta-learner's ability to learn to classify \textit{any} task seen during inference without fine-tuning.

{\color{myred}
\textbf{In-Context Learning for Dense Prediction Tasks.} Many recent works have explored in-context learning for other applications of computer vision. \citet{bar2022visual} casts in-context learning as image in-painting by first concatenating demonstration images with a query image and then using a vision model to fill-in-the-blank within this concatenated image. Building on this work, \citet{zhang2023makes} explores what demonstrations lead to strong in-painting performance and \citet{wang2023images} generalizes the approach by formulating other visual applications like segmentation, depth estimation, etc. as in-painting. Other approaches explore in-context learning for applications like scene understanding~\citep{scene}, medical image segmentation~\citep{butoi2023universeg}, and more generally dense prediction tasks~\citep{kim2023universal}. Like these approaches, we study visual in-context learning; however, this work focuses on few-shot image classification rather than dense prediction tasks.  
}

\section{Approach}
\label{sec:method}
\begin{figure*}[t] 
  \vspace{-1.cm}
    \centering
    \includegraphics[width=\textwidth]{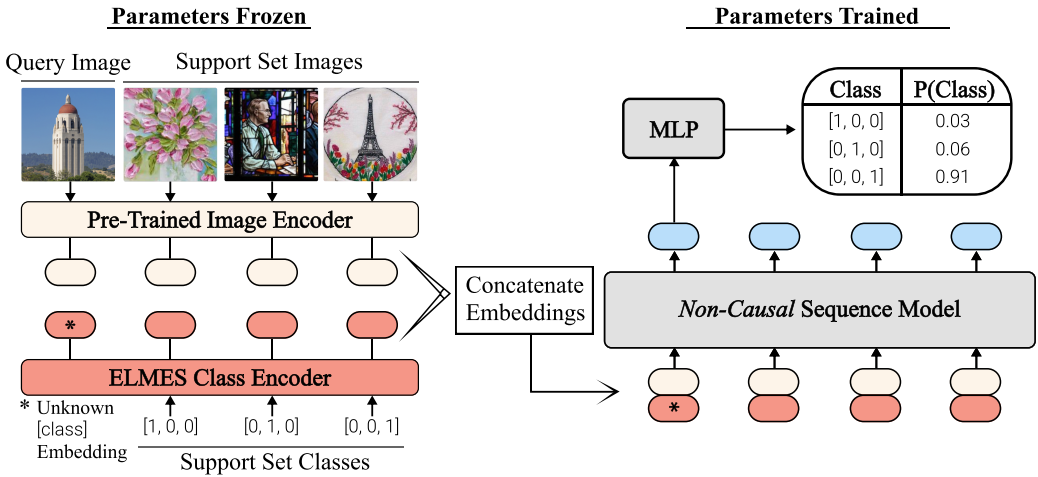}
    \vspace{-0.5cm}
    \caption{\footnotesize{Overview of \emethod. Query and support set images are encoded with a pre-trained feature extractor and then concatenated with their corresponding \etfs label embeddings. We feed the resulting sequence of concatenated vectors into a \textit{non-casual} sequence model and extract the query vector from the output sequence to predict its class.}}
    \label{fig:method}
\vspace{-0.4cm}
\end{figure*}

We adapt the ideas underpinning in-context learning in LLMs---namely learning to classify a query from a context of support set demonstrations in a single forward pass---to image classification. However, dissimilar from in-context learning, visual meta-learners should be non-causal: placing one example before another in the support set does not entail a causal relationship~\citep{cnp, pfn}.

\textbf{Architecture.} An overview of \method is shown in \Cref{fig:method}. It consists of three different components: (1) a frozen pre-trained image encoder, (2) a fixed \textit{Equal Length and Maximally Equiangular Set} (\etf) class encoder, and (3) a \textit{non-causal} sequence model. 
While pre-trained image encoders and \textit{non-causal} sequence models are well-known, to encode label information we introduce an \etfs encoder.
An \etfs encoder is a bijective mapping between the labels and a set of vectors that are equal length and maximally equiangular. 
Historically, labels have been encoded with one-hot vectors; however in \Cref{sec:theory}, we prove that an \etfs encoding of mutually exclusive classes allows the sequence model to maximally identify classes within the support set.

As visualized in \Cref{fig:method}, \method first encodes query and support set images using a frozen pre-trained feature extractor. 
Crucially, the pre-trained image encoder's embedding space distills images into low-dimensional representations so that images with similar content and visual characteristics have similar embeddings. Classes of the support set are encoded with an \etfs class encoder; however as the class of the query is unknown, we use a special learnable ``unknown token'' embedding that is learned during large-scale pre-training. \method then concatenates each image embedding with its corresponding query embedding to form an input sequence. 

Progressing through \Cref{fig:method}, this sequence is fed into a non-causal sequence model, i.e. a Transformer encoder, to condition the output representations on the full context of query and support set points. This enables dynamic and real-time classification; visual characteristics from query and support set images can be compared with each other to determine the specific visual features---such as content, textures, etc.---used to classify the query. From the output sequence of the non-causal sequence model, we select the element at the same position as the query in the input sequence, and pass this vector through a shallow MLP to predict the label of the query. 

\textbf{Large-Scale Pre-Training.} 
As our focus is universal meta-learning---and \method may encounter any new visual concept during inference---we pre-train \emethod's non-causal sequence model on few-shot image classification tasks from ImageNet-1k~\citep{imagenet}, Fungi~\citep{fungi}, MSCOCO~\citep{coco}, and WikiArt~\citep{wikiart}. We chose these datasets because they span generic object recognition (ImageNet-1k, MSCOCO), fine-grained image classification (Fungi), and unnatural image classification (WikiArt). To avoid distorting the pre-trained image encoder's embedding space, we freeze this module and only update the sequence model's parameters during pretraining. 
Similarly, since an \etfs minimizes the entropy of detecting classes within the support set, the label encoder is also frozen. 
In the context of pre-training, meta-training, and fine-tuning, \method only requires pre-training and avoids meta-training on the train/validation splits of meta-learning benchmarks or fine-tuning on the support set during inference.

\section{Theoretical Analysis}
\label{sec:theory}

In this section, we motivate our choice of the \etfs Class Encoder by considering the symmetries desirable in meta-learning algorithms. Two important symmetries are (1) invariance to the assignment of support set classes to numeric labels and (2) invariance to permutations in the ordering of the input sequence. The first invariance implies the class embeddings must be equiangular and equal norm, with an \etfs configuration minimizing the entropy of learnable model parameters detecting any given class. Later, we show an \etfs also satisfies the second symmetry.
Due to space constraints, all proofs and many definitions, properties, lemmas, and theorems are allocated to \Cref{appx:theory}. We begin with a formal definition of an \etf.

\subsection{Equal Length and Maximally Equiangular Set of Vectors}
\begin{definition}\label{def:elmes}
    An \textbf{Equal Length and Maximally Equiangular Set (\etf)} is a set of non-zero vectors $\{\phi_j\}_{j=1}^d$, $\phi_j \in \mR^{d+k}$ for some $k \geq 0$ and $d > 1$, such that $\forall j \neq j'$, $\norm{\phi_j} = \norm{\phi_{j'}}$ and $\inner{\phi_j}{\phi_{j'}} = \frac{-1}{d-1}$. Simply, all vectors in this set are equal length and maximally equiangular. 
\end{definition}
An \textit{Equal Angle and Maximally Equiangular Set} (\etf) of vectors has connections to both Equiangular Tight Frames in representation theory~\citep{welch, etf} as well as the Simplex Equiangular Tight Frames highlighted in recent neural collapse works exploring softmax-layer geometry at the terminal phase of training~\citep{neural_collapse, etf_2}. We offer additional discussion comparing these structures in \Cref{appx:theory} as well as provide an intuitive view of an \etfs as a regular $d$-simplex immersed in $\mR^{d+k}$.

\subsection{Label Symmetry}\label{sec:label_symmetry}
Symmetry in the assignment of support classes to numeric labels is an important property of meta-learning algorithms. For example, if we have the support set classes $\{$tower, bear, tree$\}$, the mapping of $\{$bear -> 1, tower -> 2, tree -> 3$\}$ should produce the same prediction for a query point as a different mapping $\{$bear -> 2, tower -> 3, tree -> 1$\}$. To explore this symmetry, we examine how class embeddings may be used by the model. 

From our formulation in \Cref{sec:method}, we represent a demonstration vector as a concatenation of an image embedding $\rho$ and a label embedding $\phi$: $\left[\begin{array}{@{}c|c@{}} \rho & \phi  \end{array}\right]$. This vector is directly fed into the self-attention mechanism, where we matrix multiply with key, query, and value self-attention heads. Taking only one of these matrices for simplicity with head-dimension $k$: 
\begin{equation}
\label{eqn:attn}\left[\begin{array}{@{}c|c@{}} \rho & \phi  \end{array}\right] \left[\begin{array}{@{}ccc@{}} \Gamma_1 & ... & \Gamma_k \\ \hline \psi_1 & ... & \psi_k \end{array}\right] = \left[\begin{array}{@{}ccc@{}} \inner{\rho}{\Gamma_1} & ... &  \inner{\rho}{\Gamma_k}  \end{array}\right] + \left[\begin{array}{@{}ccc@{}} \inner{\phi}{\psi_1} & ... &  \inner{\phi}{\psi_k}  \end{array}\right] 
\end{equation}
The output of this transformation will be the sum of two vectors: one composed of the inner products between the image embedding $\rho$ and the learnable $\{\Gamma_i\}_{i=1}^k$ and the other composed of the class embedding $\phi$ and the learnable $\{\psi_i\}_{i=1}^k$. Note that \Cref{eqn:attn} implies that \method is not invariant to the assignment of labels to support set classes due to the addition between $\inner{\rho}{\Gamma_i}$ and $\inner{\phi}{\psi_i}$; however, we can constrain the geometry of the class embeddings $\{\phi\}_{j=1}^d$ to in principle respect label symmetry. Specifically for $i \neq j \neq k$, $\inner{\phi_i}{\phi_j} = \inner{\phi_i}{\phi_k}$ and $\norm{\phi_i} = \norm{\phi_j}$.

Similar to a convolutional filter learning to match a pattern within an image, our analysis assumes the learnable $\left[\begin{array}{@{}ccc@{}} \psi_1 & ... & \psi_k \end{array}\right]$ will converge to vectors that maximize the inner product with a single class embedding subject to certain constraints. Under this assumption, we ask what geometry of the $d$-class embeddings $\{\phi\}_{j=1}^d$ allows a learnable $\psi_i$ vector to most unambiguously detect a single class embedding. 
To answer this question, we define a probability mass function for each $\psi_i$ over the set of $d-$classes
so that maximizing the probability of the $j^{th}$ class aligns with maximizing $\inner{\phi_j}{\psi_i}$ and equally minimizing $\inner{\phi_k}{\psi_i}$ for $k \neq j$.
\begin{definition}
    Let $X$ be a discrete Random Variable taking on values in $\{1, 2, ..., d\}$. For learnable vector $\psi_i$, define probability mass function $p_{\psi_i}(X=j)$ as the softmax over $\left[\begin{array}{@{}ccc@{}} \inner{\phi_1}{\psi_i} & ... &  \inner{\phi_d}{\psi_i}  \end{array}\right]$ so that:
    \begin{align*}
        p_{\psi_i}(X=j) &= \frac{e^{\norm{\psi_i}\norm{\phi_j}\cos(\theta_{i,j})}}{{\sum_{k=1}^d e^{\norm{\psi_i}\norm{\phi_j}\cos(\theta_{i,k})}}}
    \end{align*}
    where $\theta_{i,j}$ is the angle between $\phi_j$ and $\psi_i$.
\end{definition}
We say $\psi_i$ learns to detect class $j$ when $p_{\psi_i}(X=j) > p_{\psi_i}(X=k)$ for $1 \leq k \leq d$ with $k \neq j$. By symmetry in the assignment of class embeddings to support classes, we can assume that the number of $\psi_i$ learned to detect class $i$ is similar to the number of $\psi_j$ learned to detect class $j$ for all pairs $(i,j)$. We also leverage symmetry in the assignment of labels to support set classes to make the following assumptions. A justification for each assumption is located in~\Cref{appx:theory}.
\begin{assumption}\label{prop:equiangular}
    Suppose $\{\psi_i\}_{i=1}^k$ are learnable class detectors of unit norm with at least one $\psi_i$ detecting each class $1 \leq i \leq d$. The probability $p_{\psi_j}(X=j) = p_{\psi_i}(X=i)$ for $1 \leq i,j \leq d$.
\end{assumption}
\begin{assumption}\label{prop:strong_equiangular}
    Define $p_{\psi_i}(X=i)\backslash \{\phi_l\}_{l=(m+1)}^d$ as the probability of $\psi_i$ detecting $\phi_i$ from the set of vectors $\{\phi_j\}_{j=1}^m,\, m < d$. Then the probability $p_{\psi_j}(X=j)\backslash\{\phi_l\}_{l=(m+1)}^d = p_{\psi_i}(X=i)\backslash \{\phi_l\}_{l=(m+1)}^d$ for $1 \leq i,j \leq m$ and $m \geq 2$.
\end{assumption}
\begin{assumption}\label{prop:psi}
    When $\psi_i = \frac{\phi_i}{\norm{\phi_i}}$, $p_{\psi_i}(X=i)$ is maximized. 
\end{assumption}
    
When \autoref{prop:equiangular}, \autoref{prop:strong_equiangular}, and \autoref{prop:psi} hold, the set of class embeddings that maximize the probability of a learnable $\psi_i$ detecting class $i$ is necessarily an \etf.
\begin{theorem}\label{thm:elmes}
    The set of class embeddings $\{\phi_j\}_{j=1}^d$ $\forall j$, $1 \leq j \leq d$ that maximizes $p_{\psi_j}(X=j)$ is necessarily an \etf.
\end{theorem}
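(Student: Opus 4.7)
My plan is to apply the three assumptions in sequence, each peeling off one piece of geometric structure. First I substitute $\psi_i=\phi_i/\|\phi_i\|$ (Assumption~3) so that every softmax probability depends only on the norms and pairwise inner products of the $\phi_j$'s. Then I use \autoref{prop:strong_equiangular} with $m=2$ to force equal norms, \autoref{prop:strong_equiangular} with $m=3$ to force a single common pairwise cosine, and finally a Gram matrix positive semidefiniteness argument to identify that cosine as $-1/(d-1)$.

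For the equal norm step, the two-class restriction of $p_{\psi_i}(X=i)=p_{\psi_j}(X=j)$ becomes, after cross-multiplying and cancelling a shared factor of $\exp(\|\phi_i\|+\|\phi_j\|)$, the identity $(\|\phi_i\|-\|\phi_j\|)\bigl(\|\phi_i\|\|\phi_j\|+\langle\phi_i,\phi_j\rangle\bigr)=0$. The second factor vanishes only when $\phi_j$ is antiparallel to $\phi_i$, which cannot hold simultaneously across three or more indices because three nonzero vectors cannot be pairwise antiparallel; so for $d\geq 3$ all norms coincide at the optimum. With a common norm $n$ in hand, applying the same trick with $m=3$ cancels the shared $\phi_i\phi_j$ cosine term and leaves the simpler identity $\cos\theta_{ik}=\cos\theta_{jk}$. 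Cycling $i,j,k$ within each triple forces $\cos\theta_{ij}=\cos\theta_{ik}=\cos\theta_{jk}$, and chaining triples with overlapping indices propagates this to a single common cosine $c$ for every pair.

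For the final step, the probability reduces to $p_{\psi_i}(X=i)=\exp(n)/\bigl(\exp(n)+(d-1)\exp(nc)\bigr)$, which is strictly decreasing in $c$, so maximization is equivalent to minimizing $c$ over feasible configurations. The Gram matrix of $d$ equal-norm equiangular vectors has eigenvalues $n^2\bigl(1+(d-1)c\bigr)$ once and $n^2(1-c)$ with multiplicity $d-1$, so positive semidefiniteness forces $c\geq -1/(d-1)$. This bound is attained by a regular $(d-1)$-simplex, which embeds in $\mathbb{R}^{d+k}$ whenever $d+k\geq d-1$, i.e.\ always. Matching this configuration to \Cref{def:elmes} closes the argument. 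The main obstacle I anticipate is cleanly discharging the antiparallel exceptional case in the equal-norm step (where the $d=2$ boundary actually \emph{is} the ELMES and therefore requires a slightly different treatment); once past that, the remainder is a cascade of mechanical softmax cancellations followed by a standard spectral bound.
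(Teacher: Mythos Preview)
Your proof follows essentially the same skeleton as the paper's: invoke \autoref{prop:psi} to set $\psi_i=\phi_i/\|\phi_i\|$, use \autoref{prop:strong_equiangular} at $m=2$ to force equal norms (the paper's \autoref{lem:equal_norm}), at $m=3$ to force a single common cosine (the paper's \autoref{lem:equiangular}), and then maximize the resulting expression over that cosine (the paper's \autoref{lem:3}). Where you differ is in rigor rather than strategy. You explicitly discharge the antiparallel degeneracy $\cos\theta_{ij}=-1$ in the $m=2$ step, which the paper's \autoref{lem:equal_norm} simply divides through without comment; and your Gram-matrix positive-semidefiniteness argument actually pins the minimum feasible common cosine at $-1/(d-1)$, whereas the paper's \autoref{lem:3} only observes that $p_{\psi_i}(X=i)$ is monotone decreasing in $\cos\theta$ and never establishes what the extremal achievable angle is among $d$ equiangular equal-norm vectors. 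So your route is the paper's route, with two of its gaps filled.
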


Alternatively when viewed through the lens of information theory, we can interpret an \etfs as the class embedding that minimizes the entropy of $\psi_i$ detecting class $i$. Informally, \etfs causes $\psi_i$ to have the least uncertainty when detecting class $i$.
\begin{proposition}\label{lem:entropy}
    Let $H_{\psi_i}(X)$ be the entropy of $p_{\psi_i}(X)$. An \etfs minimizes $H_{\psi_i}(X)$.
\end{proposition}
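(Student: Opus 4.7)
The plan is to reduce the entropy-minimization statement to the probability-maximization already proved in \Cref{thm:elmes}. By Assumption \ref{prop:psi}, the probability $p_{\psi_i}(X=i)$ is maximized when $\psi_i = \phi_i/\norm{\phi_i}$, so throughout the proof I would fix $\psi_i$ in this canonical form and compare entropies across different geometries $\{\phi_j\}_{j=1}^d$. The first step is to argue that, for an ELMES with common norm $c = \norm{\phi_j}$ and inner products $\inner{\phi_j}{\phi_k} = -c^2/(d-1)$ for $j \neq k$, we have
\begin{equation*}
\inner{\phi_k}{\psi_i} = \tfrac{1}{\norm{\phi_i}}\inner{\phi_k}{\phi_i} = -\tfrac{c}{d-1} \qquad (k \neq i),
\end{equation*}
so the off-diagonal logits entering the softmax are all equal. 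Consequently the induced distribution $p_{\psi_i}(X)$ has the one-parameter symmetric form
\begin{equation*}
p_{\psi_i}(X=i) = p, \qquad p_{\psi_i}(X=k) = \tfrac{1-p}{d-1} \text{ for } k \neq i,
\end{equation*}
for some $p \in [1/d, 1]$. The same reduction applies to every admissible configuration satisfying Assumptions \ref{prop:equiangular}–\ref{prop:psi}, so the comparison really is between one-parameter distributions indexed only by their peak mass $p$.

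Next I would substitute this form into the definition of entropy to obtain
\begin{equation*}
H(p) = -p\log p - (1-p)\log\tfrac{1-p}{d-1},
\end{equation*}
and differentiate:
\begin{equation*}
H'(p) = -\log p - 1 + \log\tfrac{1-p}{d-1} + 1 = \log\tfrac{1-p}{(d-1)p}.
\end{equation*}
This is negative precisely when $p > 1/d$, showing that $H$ is strictly decreasing on $[1/d, 1]$. Therefore, within the family of symmetric distributions just described, minimizing entropy is equivalent to maximizing the peak probability $p$.

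The conclusion then follows by invoking \Cref{thm:elmes}: among all class-embedding geometries compatible with the label-symmetry assumptions, the ELMES is the configuration that attains the largest value of $p_{\psi_i}(X=i)$, and therefore, by monotonicity of $H$, the smallest value of $H_{\psi_i}(X)$. The main obstacle I anticipate is rigorously justifying the reduction to the symmetric one-parameter family: one must be careful that Assumptions \ref{prop:equiangular} and \ref{prop:strong_equiangular} genuinely force all off-diagonal probabilities to coincide before the monotonicity argument can be applied, and that the comparison class truly consists of configurations where $\psi_i$ is aligned with $\phi_i$ as required by Assumption \ref{prop:psi}. Once this symmetry reduction is in place, the remaining calculus is routine.
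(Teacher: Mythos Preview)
Your proposal is correct and follows essentially the same route as the paper: reduce to the one-parameter symmetric family of distributions, observe that entropy is monotone decreasing in the peak probability $p_{\psi_i}(X=i)$, and then invoke \Cref{thm:elmes} to conclude the ELMES attains the minimum. The paper's proof is terser---it frames the admissible distributions as lying on a capped simplex and simply asserts the entropy minimum occurs at the boundary where $\cos\theta$ is smallest---whereas you carry out the derivative computation $H'(p)=\log\frac{1-p}{(d-1)p}$ explicitly, but the underlying idea is identical.
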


\subsection{Permutation Invariance.} \label{sec:permutation_invariance}
In addition to label symmetry, it is also desirable for the output prediction of \method to not depend on the order of demonstrations in the sequence. For example, if we have the support set classes $\{$tower, bear, tree$\}$, the sequence $\{$(bear -> 1), (tower -> 2), (tree -> 3)$\}$ should produce the same output as the permuted sequence $\{$(tree -> 3), (bear -> 1), (tower -> 2)$\}$. Building on the prior work of \citet{kossen2021self, fifty}, it suffices to show to show that the \etfs label encoder is equivariant to permutations in the input sequence to show that \method is invariant to permutations.
\begin{proposition}\label{lem:invariance}
    Consider an $n$-sequence of one-hot labels stacked into a matrix $\mathcal{S} \in \mR^{n \times w}$, and an \etfs label encoder denoted by $W \in \mR^{w \times d}$ with $w$ denoting ``way'' and $d$ the dimension of the label embedding. The label embedding $\mathcal{S}W$ is equivariant to permutations.
\end{proposition}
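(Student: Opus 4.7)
The plan is to reduce the statement to a one-line application of the associativity of matrix multiplication, then relate permutation equivariance of the label encoder to permutation invariance of the overall \method pipeline via the cited results of \citet{kossen2021self, fifty}.

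First, I would fix notation by letting $P \in \mR^{n \times n}$ be an arbitrary permutation matrix acting on the sequence axis: applying $P$ to the left of $\mathcal{S}$ produces $P\mathcal{S}$, which is the same collection of one-hot label rows reshuffled according to the permutation represented by $P$. Equivariance of the label encoder map $\mathcal{S} \mapsto \mathcal{S}W$ then means: for every permutation matrix $P$,
\begin{equation*}
(P\mathcal{S})\,W \;=\; P\,(\mathcal{S}W).
\end{equation*}
This is immediate from associativity of matrix multiplication, since $P$ acts on the sequence index (the left factor) while $W$ acts on the label-coordinate index (the right factor), and the two actions commute. Crucially, $W$ itself does not depend on the sequence ordering, so no structure on $W$ beyond being a fixed matrix (e.g.\ the \etfs configuration) is needed for this step; the \etfs geometry was already used in \Cref{thm:elmes} and \Cref{lem:entropy} for label symmetry, and here it plays no additional role.

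To close the loop with the motivation in \Cref{sec:permutation_invariance}, I would remark that, by the construction in \Cref{sec:method}, the input to the non-causal sequence model is the concatenation of a frozen image-encoder output (applied row-wise, hence sequence-equivariant) with the label embedding $\mathcal{S}W$ (also sequence-equivariant by the display above). Transformer encoders without positional encodings are permutation equivariant on the sequence axis, and \method selects the output token at the query position and passes it through a shared MLP. Composing sequence-equivariant maps and then reading out the query slot yields a sequence-permutation-invariant prediction, as established by \citet{kossen2021self, fifty}; this is exactly the reduction the proposition claims to supply.

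There is essentially no obstacle: the only subtle point worth stating explicitly in the write-up is that the permutation $P$ acts on the sequence (row) index of $\mathcal{S}$ and not on the label (column) index, so the matrix $W$ commutes past it trivially. I would devote one short sentence to this to preempt confusion about which axis is being permuted, and otherwise keep the proof to the single display equation above.
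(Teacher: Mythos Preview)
Your argument is correct and matches the paper's own proof essentially verbatim: the paper also reduces the claim to row-wise equivariance of matrix multiplication, writing $\pi(\mathcal{S})W = \pi(\mathcal{S}W)$ for any row permutation $\pi$. Your additional paragraph unwinding how this feeds into the full pipeline is more explicit than the paper's one-line proof but adds nothing new.
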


\section{Experiments}
To quantify universal image classification performance, we evaluate a diverse set of \num{11} meta-learning benchmarks divided across \num{4} different categories:
\begin{enumerate}[leftmargin=*]
    \itemsep-0.15em 
    \item Generic Object Recognition: mini-ImageNet~\citep{mini_imagenet}, tiered-ImageNet~\citep{tiered_imagenet}, CIFAR-fs~\citep{cifar_fs}, and Pascal VOC~\citep{pascal}
    \item Fine-Grained Image Classification: CUB~\citep{cub}, Aircraft~\citep{aircraft}, meta-iNat~\citep{iNat}, and tiered meta-iNat~\citep{iNat}
    \item Unnatural Image Classification: ChestX~\citep{bscd} and Paintings~\citep{paintings}
    \item Inter-Domain Image Classification: Pascal+Paintings ~\citep{pascal, paintings}.
\end{enumerate}
Generic object recognition, fine-grained image classification, and unnatural image classification are standard benchmarking tasks in meta-learning literature~\citep{chen2020closer, pmf, frn, bscd}. Beyond this, we compose a challenging new \textit{inter-domain} category by combining Pascal VOC with Paintings so that each class is composed of both natural images and paintings. This allows us to evaluate the ability of meta-learning algorithms to generalize across domains within the same class. For example, the support image for the class ``tower'' may be Van Gogh's \textit{The Starry Night}, while the query may be a picture of the Eiffel Tower. Humans have the ability to generalize visual concepts between such domains; however, meta-learning algorithms struggle with this formulation~\citep{jankowski2011universal}. 
\begin{table*}[t]
  \vspace{-0.9cm}
  \centering
  \small
  \caption{\textbf{MiniImageNet \& CIFAR-fs} mean accuracy and standard error across \num{10000} test epochs. $\dagger$ indicates the pre-trained image encoder backbone was frozen during training. 
  }
  \vspace{-0.2cm}
  \label{tab:fs_img_1}
  \resizebox{0.85\linewidth}{!}{%
  \begin{tabular}{l|cccc}
    \toprule
      \multirow{2}{*}{\begin{tabular}{l}Method (Backbone) \end{tabular}} & \multicolumn{2}{c}{CIFAR-fs} & \multicolumn{2}{c}{MiniImageNet} \\ 
      \cmidrule(lr){2-3}\cmidrule(lr){4-5}
      & \multicolumn{1}{c}{5w-1s} & \multicolumn{1}{c}{5w-5s} & \multicolumn{1}{c}{5w-1s} & \multicolumn{1}{c}{5w-5s} \\ 
      \midrule
      \textbf{In-Domain [Meta-Training]} & & & \\
      P>M>F~\cite{pmf} & 84.3 & 92.2 & 95.3 & 98.4 \\
      \midrule
      \textbf{Universal Meta-Learning;} & & & \\
      \textbf{No Meta-Training or Finetuning} & & & & \\
      ProtoNet~\citep{proto}  & 62.9$\pm.2$ & 79.7$\pm.2$ & 92.1$\pm.1$ & 97.1$\pm.0$ \\
      ProtoNet$^\dagger$  & 57.7$\pm.2$ & 81.0$\pm.2$ & 85.3$\pm.2$ & 96.0$\pm.1$ \\
      MetaOpt~\citep{metaopt}  &  53.1$\pm.3$ & 73.1$\pm.2$ & 78.5$\pm.2$ & 91.6$\pm.1$ \\
      MetaOpt$^\dagger$  &  61.7$\pm.2$ & 83.1$\pm.1$ & 86.9$\pm.2$ & 96.5$\pm.1$ \\
      MetaQDA~\citep{metaqda}  & 60.4$\pm.2$ & 83.2$\pm.1$ & 88.2$\pm.2$ & 97.4$\pm.0$  \\
      {\color{myred} GPICL~\citep{gpicl}}  & 41.5$\pm.4$ & 78.3$\pm.2$& 95.6$\pm.1$ & 98.2$\pm.1$ \\
      {\color{myred} SNAIL~\citep{snail}} &  62.1$\pm.3$ & 71.1$\pm.3$ & 93.6$\pm.1$ & 98.1$\pm.0$ \\
      \method & \textbf{70.8}$\mathbf{\pm.2}$ & \textbf{85.5}$\mathbf{\pm.1}$ & \textbf{96.2}$\mathbf{\pm.1}$ & \textbf{98.6}$\mathbf{\pm.0}$  \\
    \bottomrule
  \end{tabular}
}
\end{table*}
\begin{table*}[t]
  \vspace{-0.3cm}
  \centering
  \small
  \caption{\textbf{Pascal \& Paintings} mean accuracy and standard error across \num{10000} test epochs. $\dagger$ indicates the the pre-trained image encoder backbone was frozen during training. 
  }
  \vspace{-0.2cm}
  \label{tab:fs_img_2}
  \resizebox{\linewidth}{!}{%
  \begin{tabular}{l|cccccc}
    \toprule
      \multirow{2}{*}{\begin{tabular}{l}Method (Backbone) \end{tabular}} & \multicolumn{2}{c}{Pascal + Paintings} & \multicolumn{2}{c}{Paintings} &\multicolumn{2}{c}{Pascal} \\ 
      \cmidrule(lr){2-3}\cmidrule(lr){4-5}\cmidrule(lr){6-7}
      & \multicolumn{1}{c}{5w-1s} & \multicolumn{1}{c}{5w-5s} & \multicolumn{1}{c}{5w-1s} & \multicolumn{1}{c}{5w-5s} & \multicolumn{1}{c}{5w-1s} & \multicolumn{1}{c}{5w-5s}\\ 
      \midrule
      \textbf{In-Domain [Meta-Training]} & & & & \\
      P>M>F & 60.7 & 74.4 & 53.2 & 65.8 & 72.2 & 84.4\\
      \midrule
      \textbf{Universal Meta-Learning} & & & & \\
      ProtoNet & 49.6$\pm.2$ & 63.5$\pm.1$ & 38.3$\pm.2$ & 48.2$\pm.1$ & 77.9$\pm.2$ & 87.3$\pm.2$ \\
      ProtoNet$^\dagger$  & 52.2$\pm.2$ & 70.6$\pm.1$ & 48.3$\pm.2$ & 64.1$\pm.1$ & 72.2$\pm.2$ & 84.3$\pm.2$ \\
      MetaOpt & 38.2$\pm.2$ & 58.2$\pm.1$ & 31.6$\pm.2$ & 48.0$\pm.1$ & 63.7$\pm.2$ & 81.7$\pm.2$ \\
      MetaOpt$^\dagger$ & 53.2$\pm.2$ & 74.8$\pm.1$ & 49.3$\pm.2$ & 65.9$\pm.1$ & 72.8$\pm.2$ & 84.4$\pm.2$ \\
      MetaQDA & 53.8$\pm.2$ & 74.1$\pm.1$ & 49.4$\pm.2$ & \textbf{66.6}$\mathbf{\pm.1}$ & 73.5$\pm.2$& 85.2$\pm.2$ \\
      {\color{myred} GPICL} & 62.6$\pm.2$ & 74.6$\pm.1$ & 51.6$\pm.2$ & 61.0$\pm.1$ & 81.7$\pm.2$ & 88.2$\pm.2$ \\
      {\color{myred} SNAIL} & 62.5$\pm.2$ & 77.6$\pm.1$ & \textbf{51.9}$\mathbf{\pm.2}$ & 65.8$\pm.1$ & 79.7$\pm.2$ & 88.0$\pm.2$ \\
      \method & \textbf{63.8}$\mathbf{\pm.2}$ & \textbf{78.3}$\mathbf{\pm.1}$ & 51.1$\pm.2$ & 65.2$\pm.1$ & \textbf{82.6}$\pm.2$ & \textbf{89.7}$\mathbf{\pm.1}$ \\
    \bottomrule
  \end{tabular}
}
\end{table*}
\begin{table*}[t!]
  \vspace{-0.9cm}
  \centering
  \small
  \caption{\textbf{meta-iNat \& tiered meta-iNat \& ChestX} mean accuracy and standard error across \num{10000} test epochs. $\dagger$ indicates the the pre-trained image encoder backbone was frozen during training. 
  }
  \vspace{-0.2cm}  
  \label{tab:fs_img_3}
  \resizebox{\linewidth}{!}{%
  \begin{tabular}{l|cccccc}
    \toprule
      \multirow{2}{*}{\begin{tabular}{l}Method (Backbone) \end{tabular}} & \multicolumn{2}{c}{meta-iNat} & \multicolumn{2}{c}{tiered meta-iNat} &\multicolumn{2}{c}{ChestX} \\ 
      \cmidrule(lr){2-3}\cmidrule(lr){4-5}\cmidrule(lr){6-7}
      & \multicolumn{1}{c}{5w-1s} & \multicolumn{1}{c}{5w-5s} & \multicolumn{1}{c}{5w-1s} & \multicolumn{1}{c}{5w-5s} & \multicolumn{1}{c}{5w-1s} & \multicolumn{1}{c}{5w-5s}\\ 
      \midrule
      \textbf{In-Domain [Meta-Training]} & & & & \\
      P>M>F & 91.2 & 96.1 & 74.8 & 89.9 & 27.0 & 32.1\\
      \midrule
      \textbf{Universal Meta-Learning} & & & & \\
      ProtoNet  &78.4$\pm.2$ & 89.4$\pm.1$ & 66.3$\pm.2$ & 82.2$\pm.2$ & 22.4$\pm.1$ & 25.3$\pm.1$ \\
      ProtoNet$^\dagger$  & 84.5$\pm.2$ & 94.8$\pm.1$ & 73.8$\pm.2$ & 89.5$\pm.1$ & 22.7$\pm.1$ & 25.8$\pm.1$ \\
      MetaOpt  & 53.0$\pm.2$ & 77.7$\pm.2$ & 37.3$\pm.2$ & 63.0$\pm.2$ & 20.8$\pm.1$ & 23.0$\pm.1$ \\
      MetaOpt$^\dagger$  & 85.5$\pm.2$ & 95.5$\pm.1$ & 75.1$\pm.2$ & 91.9$\pm.1$ & \textbf{23.0}$\mathbf{\pm.1}$ & \textbf{27.4}$\mathbf{\pm.1}$\\
      MetaQDA & 86.3$\pm.2$ & 95.9$\pm.1$ & 76.0$\pm.2$ & \textbf{92.4}$\mathbf{\pm.1}$ & 22.6$\pm.1$ & 27.0$\pm.1$ \\
      {\color{myred} GPICL} & 90.0$\pm.2$ & 95.1$\pm.1$ & 60.8$\pm.5$ & 87.6$\pm.2$ & 20.1$\pm.1$ & 20.9$\pm.1$\\
      {\color{myred} SNAIL} & 89.1$\pm.2$ & 94.8$\pm.1$ & 77.3$\pm.2$ & 86.5$\pm.2$ & 20.2$\pm.0$ & 20.0$\pm.0$ \\
      \method & \textbf{91.2}$\mathbf{\pm.2}$ & \textbf{96.3}$\mathbf{\pm.1}$ & \textbf{81.9}$\mathbf{\pm.2}$ & 91.6$\pm.1$ & 21.5$\pm.1$ & 22.2$\pm.1$ \\
    \bottomrule
  \end{tabular}
}
\end{table*}
\begin{table*}[t]
  \vspace{-0.2cm}
  \centering
  \small
  \caption{\textbf{CUB \& tiered-ImageNet \& Aircraft}  mean accuracy and standard error across \num{10000} test epochs. $\dagger$ indicates the the pre-trained image encoder backbone was frozen during training.  
  }
  \vspace{-0.2cm}
  \label{tab:fs_img_4}
  \resizebox{\linewidth}{!}{%
  \begin{tabular}{l|cccccc}
    \toprule
      \multirow{2}{*}{\begin{tabular}{l}Method (Backbone) \end{tabular}} & \multicolumn{2}{c}{CUB} & \multicolumn{2}{c}{tiered-ImageNet} &\multicolumn{2}{c}{Aircraft} \\ 
      \cmidrule(lr){2-3}\cmidrule(lr){4-5}\cmidrule(lr){6-7}
      & \multicolumn{1}{c}{5w-1s} & \multicolumn{1}{c}{5w-5s} & \multicolumn{1}{c}{5w-1s} & \multicolumn{1}{c}{5w-5s} & \multicolumn{1}{c}{5w-1s} & \multicolumn{1}{c}{5w-5s}\\ 
      \midrule
      \textbf{In-Domain [Meta-Training]} & & & & \\
      P>M>F & 92.3 & 97.0 & 93.5 & 97.3 & 79.8 & 89.3 \\
      \midrule
      \textbf{Universal Meta-Learning} & & & & \\
      ProtoNet & 59.4$\pm.2$  & 77.3$\pm.2$ & 93.5$\pm.1$ & 97.4$\pm.1$ & 37.9$\pm.2$ & 52.5$\pm.2$ \\
      ProtoNet$^\dagger$  & 87.0$\pm.2$ & \textbf{97.1}$\mathbf{\pm.1}$ & 87.3$\pm.2$ & 96.1$\pm.1$ & 62.4$\pm.3$ & 82.0$\pm.2$ \\
      MetaOpt & 71.5$\pm.2$ & 41.2$\pm.2$ & 76.6$\pm.2$ & 89.6$\pm.1$ & 41.6$\pm.2$ & 26.7$\pm.1$ \\
      MetaOpt $^\dagger$  & 87.9$\pm.2$ & \textbf{97.2}$\mathbf{\pm.1}$ & 88.2$\pm.2$ & 96.5$\pm.1$ & \textbf{64.8}$\mathbf{\pm.2}$ & \textbf{82.6}$\mathbf{\pm.2}$ \\
      MetaQDA & 88.3$\pm.2$ & \textbf{97.4}$\mathbf{\pm.1}$ & 89.4$\pm.2$ & 97.0$\pm.1$ & 63.6$\pm.3$ & \textbf{83.0}$\mathbf{\pm.2}$ \\
      {\color{myred} GPICL} & 75.1$\pm.5$ & 94.5$\pm.1$ & 94.6$\pm.1$ & 97.2$\pm.1$ & 19.8$\pm.2$ & 61.8$\pm.3$\\
      {\color{myred} SNAIL} & 87.5$\pm.2$ & 92.8$\pm.2$ & 93.1$\pm.1$ & 97.3$\pm.1$ & $48.9\pm.3$ & 35.8$\pm.3$ \\
      \method & \textbf{91.8}$\mathbf{\pm.2}$ & \textbf{97.1}$\mathbf{\pm.1}$ & \textbf{95.4}$\mathbf{\pm.1}$ & \textbf{98.1}$\mathbf{\pm.1}$ & 63.3$\pm.3$ & 79.1$\pm.2$ \\
    \bottomrule
  \end{tabular}
}
\end{table*}

\subsection{Baselines}
We evaluate the performance of \emethod, Prototypical Networks (ProtoNet)~\citep{proto}, MetaOpt~\citep{metaopt}, MetaQDA~\citep{metaqda}, SNAIL~\citep{snail}, and GPICL~\citep{gpicl} in a universal meta-learning setting by pre-training them with a ViT-base~\citep{vit} feature extractor initialized with weights from CLIP~\citep{clip}. Pre-training runs over few-shot classification tasks from ImageNet-1k, Fungi, MSCOCO, and WikiArt, and during evaluation on the set of \num{11} meta-learning benchmarks, models are not meta-trained or fine-tuned. We compare with ProtoNet, MetaOpt, and MetaQDA as they achieve state-of-the-art results when paired with a pre-trained feature extractor~\citep{pmf}. As sequence modeling underpins \emethod, we also compare with SNAIL and GPICL to evaluate the performance of past formulations of \textit{causal} sequence-based meta-learning algorithms in the universal setting.

To assess the gap between universal and in-domain meta-learning performance, we benchmark the current state-of-the-art meta-learning algorithm P>M>F~\citep{pmf}. Similar to the universal setting, P>M>F uses a ViT-base feature extractor initialized with weights from DINO~\citep{dino}; however, it meta-trains on the training set of each benchmark before evaluating on that benchmark's test set.

When pre-training all models in the universal setting, we set the learning rate to a fixed \num{1e-5} and do not perform any hyperparameter tuning in order to match the practices used by P>M>F. We use early stopping with a window size of 10 epochs during pre-training and the code release of \citet{pmf} to benchmark P>M>F with the training settings and hyperparameters described in their work.

\subsection{Results}\label{sec:experiments-results}
Our findings are summarized in \autoref{tab:fs_img_1}, \autoref{tab:fs_img_2}, \autoref{tab:fs_img_3}, and \autoref{tab:fs_img_4} and indicate that \method sets a new state-of-the-art for universal meta-learning by significantly outperforming other baselines on \num{14} of \num{22} evaluation settings. For \num{5} of the other \num{8} evaluation settings, \method matches---or nearly matches---the best performing baseline. Remarkably, \method also performs competitively with P>M>F on \num{8} out of \num{11} meta-learning benchmarks, even though P>M>F meta-trains on the training set of each benchmark.

This result suggests that the amount of new visual information learned during inference through visual in-context learning can be comparable to the amount learned when directly meta-training on in-domain data. This capacity may unlock new applications in the visual space, just as the emergence of in-context learning in LLMs has enabled many new applications in natural language.

\textbf{Benchmarks Where \method Underperforms.} The \num{3} datasets where P>M>F outperforms \method are CIFAR-fs, Aircraft, and ChestX. CIFAR-fs is a generic object recognition benchmark containing CIFAR images downsampled to 32x32 resolution. As \method and CLIP pre-train on 224x224 resolution images, downsampling by a factor of \num{49} likely induces a distribution shift that was not learned by \method during large-scale pre-training. In the cases of Aircraft and ChestX, we postulate that the CLIP embedding space---structured so images with similar captions have similar embeddings--struggles to effectively differentiate between the fine-grained, specialized classes in these tasks. For example, while a Boeing 737 and Airbus A380 have different labels in the Aircraft dataset, the scraped CLIP captions for those images may not reach that level of granularity. This corroborates the findings from~\cite{clip}, which found that in a zero-shot setting, CLIP underperforms in specialized or complex tasks. 

Our ablation study into non-CLIP pre-trained feature extractors in \Crefrange{tab:ab_fs_img_1}{tab:ab_fs_img_4} of \Cref{appx:ablation} shows \emethod's performance on Aircraft can drastically improve. Specifically, 5w-1s performance increases from $63.3$ to $81.8$ and 5w-5s performance increases from $79.1$ to $92.1$ when a ViT-Huge pre-trained on Laion-2b~\citep{laion} initializes the weights of the image encoder rather than CLIP.

\textbf{Fine-tuning CLIP Backbone.} Our findings in \Crefrange{tab:fs_img_1}{tab:fs_img_4} indicate that updating the CLIP image encoder during pre-training hurts the performance of ProtoNet and MetaOpt. We observe that these methods tend to overfit during pre-training, and our empirical results show a similar pattern: pre-training with these methods often helps the performance of benchmarks similar to ImageNet (i.e. Pascal, MiniImageNet, tiered-ImageNet), but it significantly hurts the performance of out-of-domain tasks (i.e. Aircraft, CUB, Paintings) as shown in \Crefrange{tab:fs_img_1}{tab:fs_img_4}. We believe that further training the CLIP backbone distorts the structure of its embedding space, leading to catastrophic forgetting on out-of-domain tasks. Conversely, \emethod, MetaQDA, SNAIL, and GPICL---all of which freeze the parameters of the CLIP feature extractor---benefit greatly from large-scale episodic pre-training on ImageNet-1k, Fungi, MSCOCO, and WikiArt.

\begin{figure*}[t]
\centering
\vspace{-1cm}
\begin{subfigure}{1.0\textwidth}
  \centering
  \includegraphics[width=\linewidth]{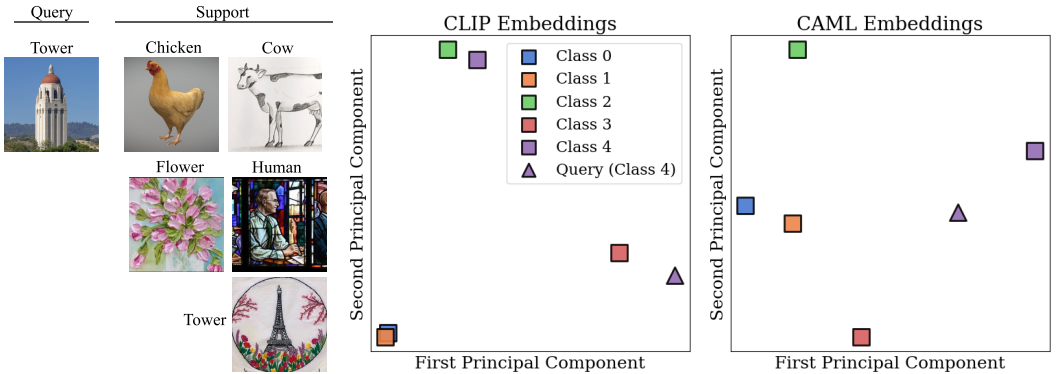}
  \caption{Left: An example task---classify images by the objects depicted. Center: image embeddings output from the Image Encoder (CLIP) in \method. Right: joint image-label representations output by the non-causal sequence model in \method for the same task.}
  \vspace{-0.2cm}
  \label{fig:analysis1}
\end{subfigure}\\
\bigskip
\begin{subfigure}{1.0\textwidth}
  \centering
  \includegraphics[width=\linewidth]{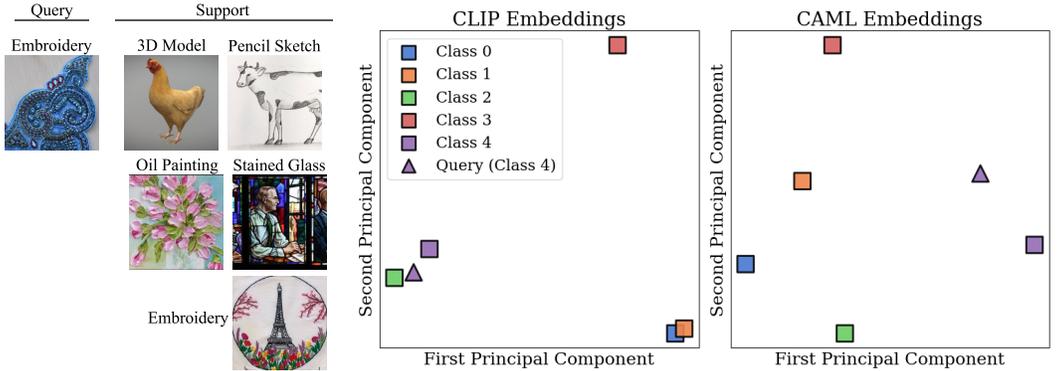}
  \caption{Left: An example task---classify images by the artistic medium used. Center: CLIP image embeddings output from the Image Encoder (CLIP) in \method. Right: joint image-label representations output by the non-causal sequence model in \method for the same task.}
  \label{fig:analysis2}
\end{subfigure}
\caption{\footnotesize{Two sample tasks over the same support images but utilizing different criteria to define classes. The nature of the query image informs the task being presented, e.g. classification by object (top) vs. classification by texture (bottom). For both tasks, the output of the non-causal sequence model provides better separation among class representations than CLIP embeddings and groups the query representation with the proper task, even when projected into 2D space by PCA.}}
\vspace{-0.4cm}
\label{fig:analysis-sample-tasks}
\end{figure*}

\section{Analysis}
\label{sec:analysis}

To better understand how \method learns during inference, we analyze its ability to dynamically update its representations. Due to casting meta-learning as \textit{non-causal} sequence modeling, \method considers the full context of query and support set to predict the label of the query.  Specifically, the query dynamically influences the representation of support set points, and the support set points dynamically influence the representation of the query as this sequence is passed through the layers of a non-causal sequence model. This property enables universal meta-learning by allowing the model to update the support and query representations based on the context of the task, not only the contents of the images, within the parameter space of the sequence model. 

An example where the query dynamically influences the support set is visualized in \autoref{fig:analysis-sample-tasks}. Given only the \num{5} support examples, the prediction task is ambiguous. However, the nature of the query determines the prediction task. The query image of a tower in \autoref{fig:analysis1} reduces the task to generic object recognition: classify the query based on the object portrayed in the image. On the other hand, and as visualized in \autoref{fig:analysis2}, the query image of embroidery reduces the prediction task to texture identification: classify the query based on artistic medium. 

To analyze how dynamic representations affect \emethod, we examine the representations of the support set and query vectors at the input to and output of the non-causal sequence model. For both examples visualized in \autoref{fig:analysis1} and \autoref{fig:analysis2}, the non-causal sequence model learns to separate support set vectors by class identity and group the query representation with the correct support set example. 

We find the frozen CLIP image embeddings are actually antagonistic for the classification-by-texture task visualized in \autoref{fig:analysis2}: the query image embedding is closest to the support set example for the second class, ``oil painting''. Unsurprisingly, the baseline methods that rely on frozen CLIP embeddings---specificially MetaQDA, ProtoNet$^\dagger$, and MetaOpt$^\dagger$---group the query with ``oil painting'' and therefore misclassify this example.
On the other hand, as \method considers the full context of the query and support set, it develops representations of the query in the context of the support set---and the support set in the context of the query---to group the query with the ``embroidery'' support set image as they share the same texture, thereby correctly classifying this example.

\section{Conclusion}
\label{sec:conclusion}
In this work, we develop \textit{universal meta-learning} to approximate the performance of visual meta-learners deployed to a ChatGPT-like application and present \emethod: a meta-learning algorithm that emulates in-context learning in LLMs by learning new visual concepts during inference without fine-tuning. Our empirical findings show that \emethod---without meta-training or fine-tuning---exceeds or matches the performance of the current state-of-the-art meta-learning algorithm on \num{8} out of \num{11} benchmarks. This result indicates visual meta-learning models are ready for deployment in a manner similar to LLMs, and we hope this work recalibrates our sense of limitations for the universal meta-learning paradigm. 

Nevertheless, there are areas where \method struggles. 
Specifically, the performance of \method on highly out-of-distribution images---e.g. chest x-rays---and varying image resolutions---e.g. rescaled CIFAR images---lags behind that of the best \textit{in-domain} approaches. 
Developing methods for the \textit{universal} setting that are robust to these cases is a promising direction for future work.

\subsubsection*{Acknowledgments}
We thank Mayee Chen, Dan Fu, Jerry Liu, and Benjamin Spector for their invaluable feedback and help during revisions of this work. We also thank Chelsea Finn for helping us improve the related work, Victor Butoi for constructive dialogue over Twitter, and the Hazy Group at Stanford as a whole their support throughout the research process. We gratefully acknowledge the support of NIH under No. U54EB020405 (Mobilize), DARPA under Nos. N660011924033 (MCS), NSF under Nos. OAC-1835598 (CINES), CCF-1918940 (Expeditions), DMS-2327709 (IHBEM), Nos. CCF2247015 (Hardware-Aware), CCF1763315 (Beyond Sparsity), CCF1563078 (Volume to Velocity), and 1937301 (RTML); US DEVCOM ARL under Nos. W911NF-23-2-0184 (Long-context) and W911NF-21-2-0251 (Interactive Human-AI Teaming); ONR under Nos. N000142312633 (Deep Signal Processing); Stanford HAI under No. 247183; NXP, Xilinx, LETI-CEA, Intel, IBM, Microsoft, NEC, Toshiba, TSMC, ARM, Hitachi, BASF, Accenture, Ericsson, Qualcomm, Analog Devices, Google Cloud, Salesforce, Total, Wu Tsai Neurosciences Institute, Chan Zuckerberg Initiative, Amazon, Genentech, GSK, Juniper Networks, KDDI, UCB, the HAI-GCP Cloud Credits for Research program, the Stanford Data Applications Initiative, and the Stanford Data Science Initiative (SDSI). The U.S. Government is authorized to reproduce and distribute reprints for Governmental purposes notwithstanding any copyright notation thereon. Any opinions, findings, and conclusions or recommendations expressed in this material are those of the authors and do not necessarily reflect the views, policies, or endorsements, either expressed or implied, of NIH, ONR, or the U.S. Government.

\bibliography{iclr2024_conference}

\begin{thebibliography}{56}
\providecommand{\natexlab}[1]{#1}
\providecommand{\url}[1]{\texttt{#1}}
\expandafter\ifx\csname urlstyle\endcsname\relax
  \providecommand{\doi}[1]{doi: #1}\else
  \providecommand{\doi}{doi: \begingroup \urlstyle{rm}\Url}\fi

\bibitem[Balazevic et~al.(2024)Balazevic, Steiner, Parthasarathy, Arandjelovi{\'c}, and Henaff]{scene}
Ivana Balazevic, David Steiner, Nikhil Parthasarathy, Relja Arandjelovi{\'c}, and Olivier Henaff.
\newblock Towards in-context scene understanding.
\newblock \emph{Advances in Neural Information Processing Systems}, 36, 2024.

\bibitem[Bar et~al.(2022)Bar, Gandelsman, Darrell, Globerson, and Efros]{bar2022visual}
Amir Bar, Yossi Gandelsman, Trevor Darrell, Amir Globerson, and Alexei Efros.
\newblock Visual prompting via image inpainting.
\newblock \emph{Advances in Neural Information Processing Systems}, 35:\penalty0 25005--25017, 2022.

\bibitem[Bertinetto et~al.(2018)Bertinetto, Henriques, Torr, and Vedaldi]{cifar_fs}
Luca Bertinetto, Joao~F Henriques, Philip~HS Torr, and Andrea Vedaldi.
\newblock Meta-learning with differentiable closed-form solvers.
\newblock \emph{arXiv preprint arXiv:1805.08136}, 2018.

\bibitem[Brown et~al.(2020)Brown, Mann, Ryder, Subbiah, Kaplan, Dhariwal, Neelakantan, Shyam, Sastry, Askell, et~al.]{gpt3}
Tom Brown, Benjamin Mann, Nick Ryder, Melanie Subbiah, Jared~D Kaplan, Prafulla Dhariwal, Arvind Neelakantan, Pranav Shyam, Girish Sastry, Amanda Askell, et~al.
\newblock Language models are few-shot learners.
\newblock \emph{Advances in neural information processing systems}, 33:\penalty0 1877--1901, 2020.

\bibitem[Butoi et~al.(2023)Butoi, Ortiz, Ma, Sabuncu, Guttag, and Dalca]{butoi2023universeg}
Victor~Ion Butoi, Jose Javier~Gonzalez Ortiz, Tianyu Ma, Mert~R Sabuncu, John Guttag, and Adrian~V Dalca.
\newblock Universeg: Universal medical image segmentation.
\newblock pp.\  21438--21451, 2023.

\bibitem[Caron et~al.(2021)Caron, Touvron, Misra, J{\'e}gou, Mairal, Bojanowski, and Joulin]{dino}
Mathilde Caron, Hugo Touvron, Ishan Misra, Herv{\'e} J{\'e}gou, Julien Mairal, Piotr Bojanowski, and Armand Joulin.
\newblock Emerging properties in self-supervised vision transformers.
\newblock In \emph{Proceedings of the IEEE/CVF international conference on computer vision}, pp.\  9650--9660, 2021.

\bibitem[Chen et~al.(2019)Chen, Liu, Kira, Wang, and Huang]{chen2019closer}
Wei-Yu Chen, Yen-Cheng Liu, Zsolt Kira, Yu-Chiang~Frank Wang, and Jia-Bin Huang.
\newblock A closer look at few-shot classification.
\newblock \emph{arXiv preprint arXiv:1904.04232}, 2019.

\bibitem[Chen et~al.(2020)Chen, Liu, Kira, Wang, and Huang]{chen2020closer}
Wei-Yu Chen, Yen-Cheng Liu, Zsolt Kira, Yu-Chiang~Frank Wang, and Jia-Bin Huang.
\newblock A closer look at few-shot classification, 2020.

\bibitem[Crowley \& Zisserman(2015)Crowley and Zisserman]{paintings}
Elliot~J Crowley and Andrew Zisserman.
\newblock In search of art.
\newblock In \emph{Computer Vision-ECCV 2014 Workshops: Zurich, Switzerland, September 6-7 and 12, 2014, Proceedings, Part I 13}, pp.\  54--70. Springer, 2015.

\bibitem[Deng et~al.(2009)Deng, Dong, Socher, Li, Li, and Fei-Fei]{imagenet}
Jia Deng, Wei Dong, Richard Socher, Li-Jia Li, Kai Li, and Li~Fei-Fei.
\newblock Imagenet: A large-scale hierarchical image database.
\newblock In \emph{2009 IEEE Conference on Computer Vision and Pattern Recognition}, pp.\  248--255, 2009.
\newblock \doi{10.1109/CVPR.2009.5206848}.

\bibitem[Dosovitskiy et~al.(2020)Dosovitskiy, Beyer, Kolesnikov, Weissenborn, Zhai, Unterthiner, Dehghani, Minderer, Heigold, Gelly, et~al.]{vit}
Alexey Dosovitskiy, Lucas Beyer, Alexander Kolesnikov, Dirk Weissenborn, Xiaohua Zhai, Thomas Unterthiner, Mostafa Dehghani, Matthias Minderer, Georg Heigold, Sylvain Gelly, et~al.
\newblock An image is worth 16x16 words: Transformers for image recognition at scale.
\newblock \emph{arXiv preprint arXiv:2010.11929}, 2020.

\bibitem[Everingham et~al.()Everingham, Van~Gool, Williams, Winn, and Zisserman]{pascal}
M.~Everingham, L.~Van~Gool, C.~K.~I. Williams, J.~Winn, and A.~Zisserman.
\newblock The {PASCAL} {V}isual {O}bject {C}lasses {C}hallenge 2012 {(VOC2012)} {R}esults.
\newblock http://www.pascal-network.org/challenges/VOC/voc2012/workshop/index.html.

\bibitem[Fickus et~al.(2018)Fickus, Jasper, King, and Mixon]{etf}
Matthew Fickus, John Jasper, Emily~J King, and Dustin~G Mixon.
\newblock Equiangular tight frames that contain regular simplices.
\newblock \emph{Linear Algebra and its applications}, 555:\penalty0 98--138, 2018.

\bibitem[Fifty et~al.(2023)Fifty, Leskovec, and Thrun]{fifty}
Christopher Fifty, Jure Leskovec, and Sebastian Thrun.
\newblock In-context learning for few-shot molecular property prediction, 2023.

\bibitem[Garnelo et~al.(2018)Garnelo, Rosenbaum, Maddison, Ramalho, Saxton, Shanahan, Teh, Rezende, and Eslami]{cnp}
Marta Garnelo, Dan Rosenbaum, Christopher Maddison, Tiago Ramalho, David Saxton, Murray Shanahan, Yee~Whye Teh, Danilo Rezende, and SM~Ali Eslami.
\newblock Conditional neural processes.
\newblock In \emph{International conference on machine learning}, pp.\  1704--1713. PMLR, 2018.

\bibitem[Graves et~al.(2014)Graves, Wayne, and Danihelka]{graves2014neural}
Alex Graves, Greg Wayne, and Ivo Danihelka.
\newblock Neural turing machines.
\newblock \emph{arXiv preprint arXiv:1410.5401}, 2014.

\bibitem[Guo et~al.(2020)Guo, Codella, Karlinsky, Codella, Smith, Saenko, Rosing, and Feris]{bscd}
Yunhui Guo, Noel~C Codella, Leonid Karlinsky, James~V Codella, John~R Smith, Kate Saenko, Tajana Rosing, and Rogerio Feris.
\newblock A broader study of cross-domain few-shot learning.
\newblock In \emph{Computer Vision--ECCV 2020: 16th European Conference, Glasgow, UK, August 23--28, 2020, Proceedings, Part XXVII 16}, pp.\  124--141. Springer, 2020.

\bibitem[Hochreiter \& Schmidhuber(1997)Hochreiter and Schmidhuber]{lstm}
Sepp Hochreiter and J{\"u}rgen Schmidhuber.
\newblock Long short-term memory.
\newblock \emph{Neural computation}, 9\penalty0 (8):\penalty0 1735--1780, 1997.

\bibitem[Hochreiter et~al.(2001)Hochreiter, Younger, and Conwell]{hochreiter2001learning}
Sepp Hochreiter, A~Steven Younger, and Peter~R Conwell.
\newblock Learning to learn using gradient descent.
\newblock In \emph{Artificial Neural Networks—ICANN 2001: International Conference Vienna, Austria, August 21--25, 2001 Proceedings 11}, pp.\  87--94. Springer, 2001.

\bibitem[Hu et~al.(2022)Hu, Li, Stühmer, Kim, and Hospedales]{pmf}
Shell~Xu Hu, Da~Li, Jan Stühmer, Minyoung Kim, and Timothy~M. Hospedales.
\newblock Pushing the limits of simple pipelines for few-shot learning: External data and fine-tuning make a difference, 2022.

\bibitem[Islam et~al.(2021)Islam, Chen, Panda, Karlinsky, Feris, and Radke]{islam2021dynamic}
Ashraful Islam, Chun-Fu~Richard Chen, Rameswar Panda, Leonid Karlinsky, Rogerio Feris, and Richard~J Radke.
\newblock Dynamic distillation network for cross-domain few-shot recognition with unlabeled data.
\newblock \emph{Advances in Neural Information Processing Systems}, 34:\penalty0 3584--3595, 2021.

\bibitem[Jankowski \& Gr{\k{a}}bczewski(2011)Jankowski and Gr{\k{a}}bczewski]{jankowski2011universal}
Norbert Jankowski and Krzysztof Gr{\k{a}}bczewski.
\newblock Universal meta-learning architecture and algorithms.
\newblock \emph{Meta-learning in computational intelligence}, pp.\  1--76, 2011.

\bibitem[Kaiser et~al.(2017)Kaiser, Nachum, Roy, and Bengio]{kaiser2017learning}
{\L}ukasz Kaiser, Ofir Nachum, Aurko Roy, and Samy Bengio.
\newblock Learning to remember rare events.
\newblock \emph{arXiv preprint arXiv:1703.03129}, 2017.

\bibitem[Kim et~al.(2023)Kim, Kim, Cho, Luo, and Hong]{kim2023universal}
Donggyun Kim, Jinwoo Kim, Seongwoong Cho, Chong Luo, and Seunghoon Hong.
\newblock Universal few-shot learning of dense prediction tasks with visual token matching.
\newblock \emph{arXiv preprint arXiv:2303.14969}, 2023.

\bibitem[Kirsch et~al.(2022)Kirsch, Harrison, Sohl-Dickstein, and Metz]{gpicl}
Louis Kirsch, James Harrison, Jascha Sohl-Dickstein, and Luke Metz.
\newblock General-purpose in-context learning by meta-learning transformers.
\newblock \emph{arXiv preprint arXiv:2212.04458}, 2022.

\bibitem[Kossen et~al.(2021)Kossen, Band, Lyle, Gomez, Rainforth, and Gal]{kossen2021self}
Jannik Kossen, Neil Band, Clare Lyle, Aidan~N Gomez, Thomas Rainforth, and Yarin Gal.
\newblock Self-attention between datapoints: Going beyond individual input-output pairs in deep learning.
\newblock \emph{Advances in Neural Information Processing Systems}, 34:\penalty0 28742--28756, 2021.

\bibitem[Lake et~al.(2015)Lake, Salakhutdinov, and Tenenbaum]{lake2015human}
Brenden~M Lake, Ruslan Salakhutdinov, and Joshua~B Tenenbaum.
\newblock Human-level concept learning through probabilistic program induction.
\newblock \emph{Science}, 350\penalty0 (6266):\penalty0 1332--1338, 2015.

\bibitem[Lake et~al.(2017)Lake, Ullman, Tenenbaum, and Gershman]{lake2017building}
Brenden~M Lake, Tomer~D Ullman, Joshua~B Tenenbaum, and Samuel~J Gershman.
\newblock Building machines that learn and think like people.
\newblock \emph{Behavioral and brain sciences}, 40:\penalty0 e253, 2017.

\bibitem[Lee et~al.(2019)Lee, Maji, Ravichandran, and Soatto]{metaopt}
Kwonjoon Lee, Subhransu Maji, Avinash Ravichandran, and Stefano Soatto.
\newblock Meta-learning with differentiable convex optimization, 2019.

\bibitem[Lin et~al.(2014)Lin, Maire, Belongie, Hays, Perona, Ramanan, Doll{\'a}r, and Zitnick]{coco}
Tsung-Yi Lin, Michael Maire, Serge Belongie, James Hays, Pietro Perona, Deva Ramanan, Piotr Doll{\'a}r, and C~Lawrence Zitnick.
\newblock Microsoft coco: Common objects in context.
\newblock In \emph{Computer Vision--ECCV 2014: 13th European Conference, Zurich, Switzerland, September 6-12, 2014, Proceedings, Part V 13}, pp.\  740--755. Springer, 2014.

\bibitem[Maji et~al.(2013)Maji, Rahtu, Kannala, Blaschko, and Vedaldi]{aircraft}
Subhransu Maji, Esa Rahtu, Juho Kannala, Matthew Blaschko, and Andrea Vedaldi.
\newblock Fine-grained visual classification of aircraft.
\newblock \emph{arXiv preprint arXiv:1306.5151}, 2013.

\bibitem[Mishra et~al.(2017)Mishra, Rohaninejad, Chen, and Abbeel]{snail}
Nikhil Mishra, Mostafa Rohaninejad, Xi~Chen, and Pieter Abbeel.
\newblock A simple neural attentive meta-learner.
\newblock \emph{arXiv preprint arXiv:1707.03141}, 2017.

\bibitem[M{\"u}ller et~al.(2021)M{\"u}ller, Hollmann, Arango, Grabocka, and Hutter]{pfn}
Samuel M{\"u}ller, Noah Hollmann, Sebastian~Pineda Arango, Josif Grabocka, and Frank Hutter.
\newblock Transformers can do bayesian inference.
\newblock \emph{arXiv preprint arXiv:2112.10510}, 2021.

\bibitem[Oh et~al.(2022)Oh, Kim, Ho, Kim, Song, and Yun]{oh2022understanding}
Jaehoon Oh, Sungnyun Kim, Namgyu Ho, Jin-Hwa Kim, Hwanjun Song, and Se-Young Yun.
\newblock Understanding cross-domain few-shot learning based on domain similarity and few-shot difficulty.
\newblock \emph{Advances in Neural Information Processing Systems}, 35:\penalty0 2622--2636, 2022.

\bibitem[Papyan et~al.(2020)Papyan, Han, and Donoho]{neural_collapse}
Vardan Papyan, XY~Han, and David~L Donoho.
\newblock Prevalence of neural collapse during the terminal phase of deep learning training.
\newblock \emph{Proceedings of the National Academy of Sciences}, 117\penalty0 (40):\penalty0 24652--24663, 2020.

\bibitem[Phoo \& Hariharan(2020)Phoo and Hariharan]{phoo2020self}
Cheng~Perng Phoo and Bharath Hariharan.
\newblock Self-training for few-shot transfer across extreme task differences.
\newblock \emph{arXiv preprint arXiv:2010.07734}, 2020.

\bibitem[Radford et~al.(2021)Radford, Kim, Hallacy, Ramesh, Goh, Agarwal, Sastry, Askell, Mishkin, Clark, et~al.]{clip}
Alec Radford, Jong~Wook Kim, Chris Hallacy, Aditya Ramesh, Gabriel Goh, Sandhini Agarwal, Girish Sastry, Amanda Askell, Pamela Mishkin, Jack Clark, et~al.
\newblock Learning transferable visual models from natural language supervision.
\newblock In \emph{International conference on machine learning}, pp.\  8748--8763. PMLR, 2021.

\bibitem[Ren et~al.(2018)Ren, Triantafillou, Ravi, Snell, Swersky, Tenenbaum, Larochelle, and Zemel]{tiered_imagenet}
Mengye Ren, Eleni Triantafillou, Sachin Ravi, Jake Snell, Kevin Swersky, Joshua~B Tenenbaum, Hugo Larochelle, and Richard~S Zemel.
\newblock Meta-learning for semi-supervised few-shot classification.
\newblock \emph{arXiv preprint arXiv:1803.00676}, 2018.

\bibitem[Saleh \& Elgammal(2015)Saleh and Elgammal]{wikiart}
Babak Saleh and Ahmed Elgammal.
\newblock Large-scale classification of fine-art paintings: Learning the right metric on the right feature.
\newblock \emph{arXiv preprint arXiv:1505.00855}, 2015.

\bibitem[Santoro et~al.(2016)Santoro, Bartunov, Botvinick, Wierstra, and Lillicrap]{santoro2016meta}
Adam Santoro, Sergey Bartunov, Matthew Botvinick, Daan Wierstra, and Timothy Lillicrap.
\newblock Meta-learning with memory-augmented neural networks.
\newblock In \emph{International conference on machine learning}, pp.\  1842--1850. PMLR, 2016.

\bibitem[Schroeder \& Cui(2018)Schroeder and Cui]{fungi}
Brigit Schroeder and Yin Cui.
\newblock {FGVCx} fungi classification challenge 2018.
\newblock \url{github.com/visipedia/fgvcx_fungi_comp}, 2018.

\bibitem[Schuhmann et~al.(2022)Schuhmann, Beaumont, Vencu, Gordon, Wightman, Cherti, Coombes, Katta, Mullis, Wortsman, et~al.]{laion}
Christoph Schuhmann, Romain Beaumont, Richard Vencu, Cade Gordon, Ross Wightman, Mehdi Cherti, Theo Coombes, Aarush Katta, Clayton Mullis, Mitchell Wortsman, et~al.
\newblock Laion-5b: An open large-scale dataset for training next generation image-text models.
\newblock \emph{Advances in Neural Information Processing Systems}, 35:\penalty0 25278--25294, 2022.

\bibitem[Snell et~al.(2017)Snell, Swersky, and Zemel]{proto}
Jake Snell, Kevin Swersky, and Richard Zemel.
\newblock Prototypical networks for few-shot learning, 2017.

\bibitem[Steiner et~al.(2021)Steiner, Kolesnikov, Zhai, Wightman, Uszkoreit, and Beyer]{train_vit}
Andreas Steiner, Alexander Kolesnikov, Xiaohua Zhai, Ross Wightman, Jakob Uszkoreit, and Lucas Beyer.
\newblock How to train your vit? data, augmentation, and regularization in vision transformers.
\newblock \emph{arXiv preprint arXiv:2106.10270}, 2021.

\bibitem[Van~der Maaten \& Hinton(2008)Van~der Maaten and Hinton]{tsne}
Laurens Van~der Maaten and Geoffrey Hinton.
\newblock Visualizing data using t-sne.
\newblock \emph{Journal of machine learning research}, 9\penalty0 (11), 2008.

\bibitem[Vaswani et~al.(2017)Vaswani, Shazeer, Parmar, Uszkoreit, Jones, Gomez, Kaiser, and Polosukhin]{attention}
Ashish Vaswani, Noam Shazeer, Niki Parmar, Jakob Uszkoreit, Llion Jones, Aidan~N Gomez, {\L}ukasz Kaiser, and Illia Polosukhin.
\newblock Attention is all you need.
\newblock \emph{Advances in neural information processing systems}, 30, 2017.

\bibitem[Vinyals et~al.(2016)Vinyals, Blundell, Lillicrap, Wierstra, et~al.]{mini_imagenet}
Oriol Vinyals, Charles Blundell, Timothy Lillicrap, Daan Wierstra, et~al.
\newblock Matching networks for one shot learning.
\newblock \emph{Advances in neural information processing systems}, 29, 2016.

\bibitem[Wah et~al.(2011)Wah, Branson, Welinder, Perona, and Belongie]{cub}
Catherine Wah, Steve Branson, Peter Welinder, Pietro Perona, and Serge Belongie.
\newblock The caltech-ucsd birds-200-2011 dataset.
\newblock 2011.

\bibitem[Wang et~al.(2023)Wang, Wang, Cao, Shen, and Huang]{wang2023images}
Xinlong Wang, Wen Wang, Yue Cao, Chunhua Shen, and Tiejun Huang.
\newblock Images speak in images: A generalist painter for in-context visual learning.
\newblock In \emph{Proceedings of the IEEE/CVF Conference on Computer Vision and Pattern Recognition}, pp.\  6830--6839, 2023.

\bibitem[Warmuth \& Kuzmin(2008)Warmuth and Kuzmin]{warmuth2008randomized}
Manfred~K Warmuth and Dima Kuzmin.
\newblock Randomized online pca algorithms with regret bounds that are logarithmic in the dimension.
\newblock \emph{Journal of Machine Learning Research}, 9\penalty0 (Oct):\penalty0 2287--2320, 2008.

\bibitem[Welch(1974)]{welch}
Lloyd Welch.
\newblock Lower bounds on the maximum cross correlation of signals (corresp.).
\newblock \emph{IEEE Transactions on Information theory}, 20\penalty0 (3):\penalty0 397--399, 1974.

\bibitem[Wertheimer \& Hariharan(2019)Wertheimer and Hariharan]{iNat}
Davis Wertheimer and Bharath Hariharan.
\newblock Few-shot learning with localization in realistic settings.
\newblock In \emph{Proceedings of the IEEE/CVF conference on computer vision and pattern recognition}, pp.\  6558--6567, 2019.

\bibitem[Wertheimer et~al.(2020)Wertheimer, Tang, and Hariharan]{frn}
Davis Wertheimer, Luming Tang, and Bharath Hariharan.
\newblock Few-shot classification with feature map reconstruction networks, 2020.

\bibitem[Yang et~al.(2022)Yang, Xie, Chen, Li, Lin, and Tao]{etf_2}
Yibo Yang, Liang Xie, Shixiang Chen, Xiangtai Li, Zhouchen Lin, and Dacheng Tao.
\newblock Do we really need a learnable classifier at the end of deep neural network?
\newblock \emph{arXiv e-prints}, pp.\  arXiv--2203, 2022.

\bibitem[Zhang et~al.(2021)Zhang, Meng, Gouk, and Hospedales]{metaqda}
Xueting Zhang, Debin Meng, Henry Gouk, and Timothy Hospedales.
\newblock Shallow bayesian meta learning for real-world few-shot recognition, 2021.

\bibitem[Zhang et~al.(2023)Zhang, Zhou, and Liu]{zhang2023makes}
Yuanhan Zhang, Kaiyang Zhou, and Ziwei Liu.
\newblock What makes good examples for visual in-context learning?
\newblock \emph{arXiv preprint arXiv:2301.13670}, 2023.

\end{thebibliography}
\bibliographystyle{iclr2024_conference}
\newpage
\appendix
\section{Appendix}
\subsection{Supplementary Theoretical Analysis}
\label{appx:theory}
We offer additional insight into the theoretical analysis presented in \Cref{sec:theory} and provide the omitted remarks, properties, lemmas, and proofs. 

\subsubsection{Equiangular Tight Frames}
\citet{neural_collapse} coin the term Simplex Equianguar Tight Frame to describe a set of vectors $\{ \phi_j \}_{j=1}^d$ such that the minimum angle between any two pairs of vectors is maximized and all vectors have equal norm. Formally, 
\begin{definition}\label{def:simplex_etf}
Let $\mR^d$ be a $d-$dimensional inner product space over $\mR$ with the Euclidean inner product. A \textbf{Simplex ETF} is a set of d vectors $\{ \phi_j\}_{j=1}^d$, $\phi_j \in \mR^d$, specified by the columns of $\sqrt{\frac{d}{d-1}} (\mathit{I}_d - \frac{1}{d} \mathds{1}\mathds{1}^T)$
\end{definition}
where $\mathit{I}_d \in \mR^{d\times d}$ is the identity matrix and $\mathds{1} \in \mR^{d\times1}$ is the ones vector. Somewhat contradictory, a Simplex Equiangular Tight Frame is not an Equiangular Tight Frame~\citep{welch} as this set of vectors does not form a tight frame in $\mR^d$.
\begin{definition}\label{def:etf}
Let $\mR$ be a $d-$dimensional space over $\mR$ with the Euclidean inner product. An \textbf{Equiangular Tight Frame (ETF)} is a set of non-zero, equal norm vectors $\phis$, $n \geq d$, that achieves the Welch lower bound: 
\begin{align*}
   \max\limits_{j\neq j'} \frac{|\inner{\phi_j}{\phi_{j'}}|}{\norm{\phi_j}\norm{\phi_{j'}}} =  \sqrt{\frac{n-d}{d(n-1)}}
\end{align*}
\end{definition}
It is well-known that a set of non-zero equal-norm vectors satisfies the Welch lower bound if and only if that set of vectors is equiangular and also a tight frame for $\mR^d$~\citep{etf}.
\begin{definition}\label{def:equiangular}
    A set of non-zero, equal norm vectors $\phis$ is \textbf{equiangular} if $\forall j \neq j'$, $|\inner{\phi_j}{\phi_{j'}}| = c$ for some $c \in \mR$, $c >0$.
\end{definition}
\begin{definition}\label{def:tight_frame}
    $\phis$ is a \textbf{tight frame} for $\mR^d$ if, $\forall v \in \mR^d$, $\exists A > 0$ such that $A\norm{v}^2 = \sum_{j=1}^n |\inner{\phi_j}{v}|^2$.
\end{definition}
\begin{remark}
    A Simplex Equiangular Tight Frame is not a tight frame.
\end{remark}
\begin{proof}
    Observe that for any finite $d$, for $\{\phi_j\}_{j=1}^d$ equal to the columns of $\sqrt{\frac{d}{d-1}} (\mathit{I}_d - \frac{1}{d} \mathds{1}\mathds{1}^T)$, it is the case that $\sum\limits_{j=1}^{d-1} \phi_j = -1 * \phi_d$. So $\phis$ do not span $\mR^d$, and therefore, cannot be a tight frame.
\end{proof}
Similarly, a Simplex ETF is not a $d-$simplex.
\begin{remark}
    A Simplex Equiangular Tight Frame is not a simplex.
\end{remark}
\begin{proof}
    A simplex in $\mR^n$ requires $n+1$ points. 
\end{proof}

To align terminology with properties, we generalize a Simplex ETF to an \etfs in \autoref{def:elmes}: a set of $d$ vectors in a $(d+k)$-dimensional ambient space with $k \geq 0$.
Observe that a regular simplex is a special type of ETF in which the number of vectors in the set is one more than the dimension of the space that they span~\citep{etf}. Building on this observation, an intuitive view of \etfs is a regular $d-$simplex immersed in $\mR^{d+k}$.
\begin{figure*}[t] 
    \centering
    \includegraphics[width=0.8\textwidth]{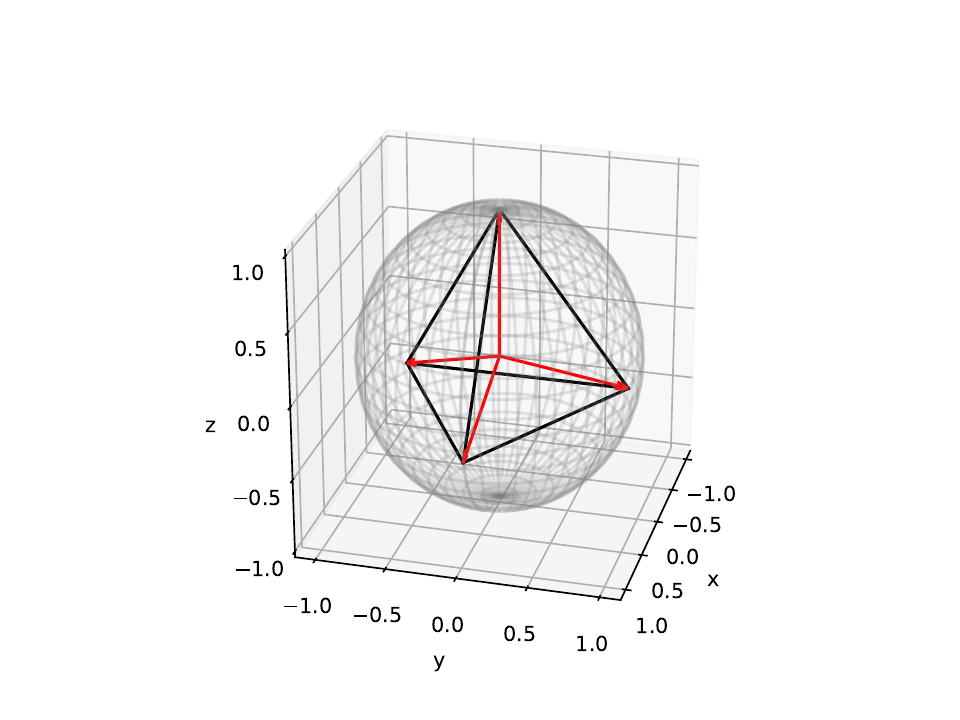}
    \caption{\footnotesize{A visualization of a $d=4$ \etfs in $\mR^3$. Observe the endpoints of the vectors of an \etfs lie on the vertices of a centered regular tetrahedron.}}
    \label{fig:elmes}
\vspace{-0.4cm}
\end{figure*}
\begin{remark}\label{prop:immersion_1}
    Consider a centered d-dimensional regular simplex with vertices $\{ \phi_j \}_{j=1}^{d+1}$, $\phi_j \in \mR^{d+1}$. Let $\imath_{can}$ be the canonical inclusion map: $\mR^d \rightarrow \mR^{d + 1}$, $\imath_{can}(x_1, x_2, ..., x_d) = (x_1, x_2, ..., x_d, 0_{d+1})$, then $\{\imath_{can}(\phi_j)\}_{j=1}^{d+1}$ is an \etf.
\end{remark}
\begin{proof}
    The two criteria of an \etfs are maximally equiangular and equal length. As all vertices of a centered regular $d-$simplex are equal length from the origin,  $\{ \phi_j \}_{j=1}^{d+1}$ are equal length and therefore $\{ \imath_{can} (\phi_j) \}_{j=1}^{d+1}$ must also have equal length.

    Similarly, from Lemma 10 of \citet{neural_collapse}, we know the cosine of the angle between any two vectors in a $(d+1)-$dimensional \etfs is $\frac{-1}{d}$. It is known that for a $d-$dimensional regular simplex in $\mR^d$ centered at the origin, the angle subtended by any two verticies through the origin is $\cos(\theta) = \frac{-1}{d}$. Immersing $\{\phi_j\}_{j=1}^{d+1}$, $\phi_j \in \mR^d$, into $\mR^{d+1}$ via the canonical inclusion operator $\imath_{can}$ does not change the pairwise angle between vectors in this set: $\inner{\phi_j}{\phi_{j'}} = \inner{\imath_{can}(\phi_j)}{\imath_{can}(\phi_{j'})}$. As $\{ \imath_{can} (\phi_j) \}_{j=1}^{d+1}$ are equal length and maximally equiangular, it forms an \etf.
\end{proof}
We now show that an \etfs immersed in a higher dimension remains an \etf. Taken with \autoref{prop:immersion_1}, we can view a high-dimensional \etfs in $\mR^d$ composed of $n+1$ vectors $\{ \phi_j \}_{j=1}^{n+1}$, $d >> n + 1$, as simply a $n-$simplex immersed in $\mR^d$ via the canonical inclusion operator.
\begin{lemma}
    Let $\imath_{can}: \mR^{d} \rightarrow \mR^{d+k}$. If $\phis$ is an \etfs, then $\{ \imath_{can}(\phi_j)\}_{j=1}^d$ is an \etf.
\end{lemma}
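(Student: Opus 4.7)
The plan is to observe that the canonical inclusion $\imath_{can}: \mR^d \to \mR^{d+k}$ is an isometric embedding of inner product spaces, so it preserves both norms and pairwise inner products. Consequently, the two defining properties of an \etfs — equal norm and equal pairwise inner product $-1/(d-1)$ — transfer verbatim from $\{\phi_j\}_{j=1}^d$ to $\{\imath_{can}(\phi_j)\}_{j=1}^d$.

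Concretely, I would first verify the inner product preservation directly from the coordinate definition: writing $\imath_{can}(\vx) = (x_1, \ldots, x_d, 0, \ldots, 0)$, any pair $\phi_j, \phi_{j'} \in \mR^d$ satisfies $\inner{\imath_{can}(\phi_j)}{\imath_{can}(\phi_{j'})} = \sum_{i=1}^d (\phi_j)_i (\phi_{j'})_i + \sum_{i=d+1}^{d+k} 0 \cdot 0 = \inner{\phi_j}{\phi_{j'}}$. Taking $j = j'$ yields norm preservation as well.

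Next, I would apply the hypothesis that $\{\phi_j\}_{j=1}^d$ is an \etfs. By \Cref{def:elmes}, for all $j \neq j'$ we have $\norm{\phi_j} = \norm{\phi_{j'}}$ and $\inner{\phi_j}{\phi_{j'}} = \frac{-1}{d-1}$. Combining with the isometry of $\imath_{can}$, for all $j \neq j'$ the inclusions satisfy $\norm{\imath_{can}(\phi_j)} = \norm{\phi_j} = \norm{\phi_{j'}} = \norm{\imath_{can}(\phi_{j'})}$ and $\inner{\imath_{can}(\phi_j)}{\imath_{can}(\phi_{j'})} = \frac{-1}{d-1}$. Finally, $\imath_{can}(\phi_j)$ is non-zero since $\phi_j$ is non-zero and $\imath_{can}$ is injective. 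Since $\imath_{can}(\phi_j) \in \mR^{d+k} = \mR^{d + (k' )}$ for $k' = k \geq 0$, the set meets \Cref{def:elmes} with the same $d$ and ambient dimension $d + k$, so it is an \etf.

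There is no real obstacle here — the entire content is that the canonical inclusion is an isometry on the first $d$ coordinates, so every \etfs-defining equation is preserved coordinate-by-coordinate. The only thing to be careful about is the bookkeeping on the ambient dimension parameter: the original \etfs lives in some $\mR^{d + k_0}$ already (with $k_0 \geq 0$), and after applying $\imath_{can}: \mR^{d+k_0} \to \mR^{d + k_0 + k}$ the resulting set lives in $\mR^{d + (k_0 + k)}$, which still fits the $k \geq 0$ clause of \Cref{def:elmes}.
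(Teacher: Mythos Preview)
Your argument is correct, but it proceeds differently from the paper. You exploit the explicit numerical condition in \Cref{def:elmes}---namely $\inner{\phi_j}{\phi_{j'}} = -\tfrac{1}{d-1}$ together with equal norms---and simply observe that $\imath_{can}$ is an isometric embedding, so both defining equalities (and non-zeroness) carry over verbatim to the image. The paper instead argues from the \emph{maximally equiangular} interpretation: it notes that any $d$ vectors in $\mR^{d+k}$ span at most a $d$-dimensional subspace, so an \etfs\ in $\mR^{d+k}$ cannot achieve a larger pairwise angle than one in $\mR^d$; hence the included set remains extremal. Your route is shorter and purely syntactic once the definition pins down the inner-product value; the paper's route is more geometric and would still go through even if an \etfs\ were defined only by the extremal property rather than by the specific constant $-\tfrac{1}{d-1}$. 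Your closing remark about the ambient-dimension bookkeeping (the original \etfs\ already sitting in some $\mR^{d+k_0}$) is a nice clarification that the paper's statement glosses over.
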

\begin{proof}
    This reduces to proving that the maximum angle between a set of $d$ equiangular points in $\mR^d$ is the maximum angle between a set of $d$ equiangular points in $\mR^{d+k}$. Let $\{ \phi_j \}_{j=1}^d$ be an \etfs such that $\phi_j \in \mR^d$ and $\{ \psi_j \}_{j=1}^d$ be an \etfs such that $\psi_j \in \mR^{d+k}$. Then $\{ \psi_j \}_{j=1}^d$ lie in a $d-$dimensional subspace of $\mR^{d+k}$: $\exists \gamma_1, ..., \gamma_d$ and basis vectors $e_1, ..., e_d$ such that $\forall \psi_j \in \{ \psi_j\}_{j=1}^d$, $\psi_j = \sum_{i=1}^d \gamma_i e_i$. Therefore, $\forall j \neq j'$, $\inner{\psi_j}{\psi_{j'}} \leq \inner{\phi_j}{\phi_{j'}}$ as $\{ \phi_j \}_{j=1}^d$ are an \etfs for $\mR^d$. 
\end{proof}
\subsubsection{\etfs Rotational Symmetry}
There are infinitely many \etfs by rotating one such set of vectors about the origin.
\begin{remark}
    Let $\{ \phi_j\}_{j=1}^d$ be an \etfs in $\mR^{d+k}$ for some $k \geq 0$. Let $o: \mR^{d+k} \rightarrow \mR^{d+k}$ be an operator from the special orthogonal group $SO(d+k)$. Then $\{ o(\phi_j) \}_{j=1}^d$ is also an \etfs.
\end{remark}
\begin{proof}
    Length is preserved as operations in $SO(d+k)$ have determinant 1 and angles are similarly preserved as operations in $SO(d+k)$ are unitary (i.e. preserving inner product). 
\end{proof}
\subsubsection{A Set of Orthonormal Basis Vectors Is Not an \etf}
A final remark relates to the common misconception that a set of orthonormal basis vectors $\{ \psi_j \}_{j=1}^d$ is an \etf. While $\{ \psi_j \}_{j=1}^d$ is an ETF in $\mR^d$ since this set realizes the Welch lower-bound in~\autoref{def:etf}, these vectors are not maximally equiangular: $\inner{\psi_j}{\psi_{j'}} = 0 > \frac{-1}{d-1}$. 
\subsection{\etfs maximizes $p_{\psi_j}(X=j)$}
\label{appx:thm1}
\begin{proof}[Justification of \autoref{prop:equiangular}]
    This property is implied by symmetry in the assignment of class embeddings to support classes. As the assignment is arbitrary, all learnable $\psi_i$ class detectors should have equal probability of detecting their respective class. For simplicity of notation, we say $\psi_i$ learns to detect class embedding $\phi_i$ rather another class embedding $\phi_k$, $k \neq i$.
\end{proof}
\begin{proof}[Justification of \autoref{prop:strong_equiangular}]
    Informally, this property states that, for any $m$-subset of classes $\{\phi_j\}_{j=1}^{m}$, the probability of $\psi_j$ detecting class $j$ is equal to the probability of $\psi_i$ detecting class $i$. This is again implied by symmetry in the assignment of class embeddings to support classes as meta-learning algorithms may predict among a subset of $m$ classes in the support set rather than the maximum number of classes $d$. 
\end{proof}
\begin{proof}[Justification of \autoref{prop:psi}]
    Recall in $\mR^d$, $\inner{\psi}{\phi} = \norm{\psi}\norm{\phi}\cos(\theta)$ where $\theta$ is the angle between $\psi_i$ and $\phi_i$. Then this assumption constrains our set $\{\phi_j\}_{j=1}^d$ so that relative norm of $\phi_i$ with respect to $\phi_j$ is lower bounded by $\cos(\theta_{i,j})$: $\frac{\norm{\phi_i}}{\norm{\phi_j}} > \cos(\theta_{i,j})$. 
    
    Informally, the $\{ \phi_j\}_{j=1}^d$ are sufficiently spread out in the ambient space so that the learnable $\psi_i$ that maximizes $p_{\psi_i}(X=i)$ is $\phi_i$ itself: $\psi_i = \frac{\phi_i}{\norm{\phi_i}}$. This constraint helps us avoid degenerative cases like $\{\phi_j\}_{j=1}^d$ all equal. For example, $\phi_j = \alpha \phi_i$, $i \neq j$ with $\alpha > 0$ is one such degenerative case where one class embedding vector is stacked on a different class embedding, but with higher norm. 
\end{proof}
\begin{proof}[Proof of \autoref{thm:elmes}]
Taken with \autoref{prop:equiangular}, \autoref{prop:strong_equiangular}, and \autoref{prop:psi}, it suffices to show \autoref{lem:2} and \autoref{lem:3} to prove \autoref{thm:elmes}.
\end{proof}
\begin{theorem}\label{lem:2}
    $p_{\psi_1}(X=1) = p_{\psi_2}(X=2) = ... = p_{\psi_d}(X=d) \iff \{ \phi_j\}_{j=1}^d$ are equiangular and equal norm.
\end{theorem}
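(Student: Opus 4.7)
The plan is to handle each direction separately, using \autoref{prop:psi} to pin down the optimal detector $\psi_i = \phi_i / \norm{\phi_i}$, and invoking \autoref{prop:strong_equiangular} to restrict attention to $2$- and $3$-element subsets of class embeddings where the algebra is tractable.

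\textbf{The easy direction ($\Leftarrow$).} Assume all $\phi_j$ have common norm $r$ and common pairwise cosine $c = \cos \theta_{i,j}$ for $i \neq j$. By \autoref{prop:psi}, the optimal unit detector is $\psi_i = \phi_i / r$, which gives $\cos\theta_{i,i} = 1$. Plugging into the softmax definition of $p_{\psi_i}$ yields
\begin{equation*}
p_{\psi_i}(X=i) \;=\; \frac{e^{r}}{e^{r} + (d-1)\,e^{rc}},
\end{equation*}
which is independent of $i$, giving the claimed equality.

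\textbf{The hard direction ($\Rightarrow$).} I first extract equal norms from the $m=2$ restriction, then extract equiangularity from the $m=3$ restriction. Fix any pair $i \neq j$ and apply \autoref{prop:strong_equiangular} with $m=2$ to the subset $\{\phi_i, \phi_j\}$. Writing $a = \norm{\phi_i}$, $b = \norm{\phi_j}$, $c = \cos\theta_{i,j}$, and using $\psi_i = \phi_i/a$, $\psi_j = \phi_j/b$, the equality $p_{\psi_i}(X=i)\backslash\{\phi_l\}_{l=3}^d = p_{\psi_j}(X=j)\backslash\{\phi_l\}_{l=3}^d$ becomes
\begin{equation*}
\frac{e^{a}}{e^{a} + e^{bc}} \;=\; \frac{e^{b}}{e^{b} + e^{ac}},
\end{equation*}
which after cross-multiplication collapses to $e^{a(1+c)} = e^{b(1+c)}$. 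Since $c > -1$ (the vectors are distinct and the configuration is non-degenerate by \autoref{prop:psi}), this forces $a = b$. Running this argument over every pair shows $\norm{\phi_1} = \cdots = \norm{\phi_d} =: r$.

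With equal norms established, fix any three indices $i,j,k$ and apply \autoref{prop:strong_equiangular} with $m=3$ to $\{\phi_i,\phi_j,\phi_k\}$. The equality $p_{\psi_i}(X=i) = p_{\psi_j}(X=j)$ restricted to this triple reduces (after canceling the common $e^r$ in the numerators and using $\cos\theta_{i,j} = \cos\theta_{j,i}$) to
\begin{equation*}
e^{r\cos\theta_{i,k}} \;=\; e^{r\cos\theta_{j,k}},
\end{equation*}
so $\cos\theta_{i,k} = \cos\theta_{j,k}$. Symmetric comparisons within the triple give $\cos\theta_{i,j} = \cos\theta_{i,k} = \cos\theta_{j,k}$, and letting the triple range over all of $\{1,\dots,d\}$ propagates a single common pairwise cosine, establishing equiangularity.

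\textbf{Anticipated obstacle.} The computations themselves are elementary; the only delicate point is justifying that \autoref{prop:strong_equiangular} legitimately applies to arbitrary subsets after the optimal-detector substitution of \autoref{prop:psi}, and that the case $c = -1$ can be excluded (otherwise the $m=2$ step does not pin down the norms). Both concerns are handled by the non-degeneracy built into \autoref{prop:psi}, which forces the $\phi_j$ to be sufficiently spread out for the detectors $\psi_i = \phi_i/\norm{\phi_i}$ to remain well-defined and for $1+\cos\theta_{i,j} > 0$.
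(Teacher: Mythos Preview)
Your proposal is correct and follows essentially the same route as the paper: the paper also proves $(\Leftarrow)$ by direct substitution into the softmax, and proves $(\Rightarrow)$ by invoking \autoref{prop:strong_equiangular} to reduce to the $m=2$ case for equal norms and the $m=3$ case for equiangularity, with the same cross-multiplication algebra. Your write-up is in fact slightly tidier than the paper's, and you flag the $1+\cos\theta_{i,j}=0$ degeneracy that the paper glosses over.
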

To show the forward $(\Rightarrow)$ direction, it suffices to first show $p_{\psi_1}(X=1) = p_{\psi_2}(X=2) = ... = p_{\psi_d}(X=d) \Rightarrow \{ \phi_j\}_{j=1}^d$ are equal norm and then show $p_{\psi_1}(X=1) = p_{\psi_2}(X=2) = ... = p_{\psi_d}(X=d)$ $\Rightarrow \{ \phi_j\}_{j=1}^d$ are equiangular.
\begin{lemma}\label{lem:equal_norm}
        $\mathit{p_{\psi_1}(X=1) = p_{\psi_2}(X=2) = ... = p_{\psi_d}(X=d) \Rightarrow \{ \phi_j\}_{j=1}^d}$ \textit{are equal norm}.
\end{lemma}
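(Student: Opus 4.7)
The plan is to extract enough information from Assumptions~\ref{prop:strong_equiangular} and~\ref{prop:psi} to reduce the claim to a simple algebraic identity comparing pairs of class norms. The crux is that Assumption~\ref{prop:psi} pins down each detector $\psi_i$ explicitly as $\phi_i/\norm{\phi_i}$, after which Assumption~\ref{prop:strong_equiangular} applied at the pair level produces an equation whose only nondegenerate solution is equal norms.

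First I would substitute $\psi_i = \phi_i/\norm{\phi_i}$ from Assumption~\ref{prop:psi}, giving $\norm{\psi_i}=1$ and $\cos\theta_{i,i}=1$. Writing $r_j \coloneqq \norm{\phi_j}$ and $\theta_{i,j}$ for the angle between $\phi_i$ and $\phi_j$, the softmax expression for $p_{\psi_i}(X=i)$ collapses to
$$p_{\psi_i}(X=i) \;=\; \frac{e^{r_i}}{\sum_{k=1}^{d} e^{r_k \cos\theta_{i,k}}}.$$
Next I would invoke Assumption~\ref{prop:strong_equiangular} at $m=2$. Although the assumption is stated for a canonical ordering, the label-symmetry argument used to justify it lifts it to every pair $(i,j)$. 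Restricting the softmax denominator to the two-element set $\{\phi_i,\phi_j\}$ and equating the two restricted probabilities gives, after cross-multiplying and cancelling the common $e^{r_i+r_j}$ term,
$$e^{r_i(1+\cos\theta_{i,j})} \;=\; e^{r_j(1+\cos\theta_{i,j})},$$
and therefore $r_i(1+\cos\theta_{i,j}) = r_j(1+\cos\theta_{i,j})$. Whenever $\cos\theta_{i,j} \neq -1$, this forces $r_i = r_j$, which is exactly the desired conclusion for that pair.

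The main obstacle is the antiparallel edge case $\cos\theta_{i,j}=-1$, in which the pairwise equation becomes the tautology $1=1$ and imposes no constraint on $r_i,r_j$. I would handle this by noting that for $d\geq 3$, any given $\phi_i$ can be antiparallel to at most one other vector in the set; hence for any ``bad'' pair $(i,j)$ there exists a third index $k$ such that both $\cos\theta_{i,k}\neq -1$ and $\cos\theta_{j,k}\neq -1$, yielding $r_i = r_k = r_j$ by chaining the nondegenerate case above. Propagating this chaining over all pairs gives $r_1 = r_2 = \cdots = r_d$, establishing equal norms. I expect the only subtle step to be the formal justification that Assumption~\ref{prop:strong_equiangular} extends from its canonically-indexed statement to arbitrary pairs; the rest is routine algebra once the substitution from Assumption~\ref{prop:psi} is made.
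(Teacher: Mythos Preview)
Your proposal is correct and follows essentially the same line as the paper: invoke Assumption~\ref{prop:psi} to set $\psi_i=\phi_i/\norm{\phi_i}$, reduce to pairs via Assumption~\ref{prop:strong_equiangular}, and cross-multiply to obtain $r_i(1+\cos\theta_{i,j})=r_j(1+\cos\theta_{i,j})$. The paper carries out exactly this computation for $d=2$ and then lifts to general $d$ by taking all $2$-combinations, though it silently divides through by $1+\cos\theta_{i,j}$ without discussing the antiparallel edge case you flag.
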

\begin{proof}
    This implication holds when $d=2$:
    \begin{align*}
        p_{\psi_1}(X=1) = \frac{e^{\norm{\phi_1}}}{e^{\norm{\phi_1}} + e^{\norm{\phi_2}\cos(\theta_{1,2})}} &= \frac{e^{\norm{\phi_2}}}{e^{\norm{\phi_2}} + e^{\norm{\phi_1}\cos(\theta_{1,2})}} = p_{\psi_2}(X=2)\\
        e^{\norm{\phi_1}}(e^{\norm{\phi_2}} + e^{\norm{\phi_1}\cos(\theta_{1,2})}) &= e^{\norm{\phi_2}}(e^{\norm{\phi_1}} + e^{\norm{\phi_2}\cos(\theta_{1,2})})\\
        e^{\norm{\phi_1} + \norm{\phi_1}\cos(\theta_{1,2})} &= e^{\norm{\phi_2} + \norm{\phi_2}\cos(\theta_{1,2})} \\
        \norm{\phi_1}(1 + \cos(\theta_{1,2})) &= \norm{\phi_2}(1+\cos(\theta_{1,2})) \\ 
        \norm{\phi_1} &= \norm{\phi_2}
    \end{align*}

    Suppose $d > 2$ and $p_{\psi_1}(X=1) = ... = p_{\psi_d}(X=d)$. By \autoref{prop:strong_equiangular}, all $m-$combinations $\binom{d}{m}$ of $\{p_{\psi_1}(X=1), ..., p_{\psi_d}(X=d)\}$ are equal. This implies all 2-combinations are equal: $p_{\psi_i}(X=i) = p_{\psi_j}(X=j) \Rightarrow \norm{\phi_i} = \norm{\phi_j}$. Therefore, $\norm{\phi_1} = ... = \norm{\phi_d}$.
\end{proof}
\begin{lemma}\label{lem:equiangular}
        $p_{\psi_1}(X=1) = p_{\psi_2}(X=2) = ... = p_{\psi_d}(X=d)$ $\Rightarrow \{ \phi_j\}_{j=1}^d$ are equiangular.
\end{lemma}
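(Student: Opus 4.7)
The plan is to combine the three ingredients already at hand: the equal-norm conclusion of Lemma~\ref{lem:equal_norm} (which follows from the same hypothesis), Assumption~\ref{prop:psi} (which pins $\psi_i = \phi_i/\|\phi_i\|$ at the maximizer, forcing $\cos(\theta_{i,i}) = 1$), and Assumption~\ref{prop:strong_equiangular} (which says the probability equalities persist after restricting to any $m$-subset of class embeddings, with $m \ge 2$). The cleanest path is to apply the restricted identities on every $3$-subset of $\{\phi_1, \dots, \phi_d\}$; two carefully chosen such identities yield the equiangular condition by a purely algebraic cancellation.

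First I would set $c = \|\phi_1\| = \cdots = \|\phi_d\|$ (available from Lemma~\ref{lem:equal_norm}) and write $a_{i,k} = \cos(\theta_{i,k})$ for the cosine of the angle between $\phi_i$ and $\phi_k$. With $\psi_i = \phi_i/c$ from Assumption~\ref{prop:psi}, the detector probability simplifies to
\[
p_{\psi_i}(X = i) \;=\; \frac{e^{c}}{e^{c} + \sum_{k \neq i} e^{c\, a_{i,k}}}.
\]

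Next, for $d \ge 3$ (the case $d = 2$ is trivially equiangular since there is only a single pair), pick an arbitrary triple $\{i, j, k\}$ and apply Assumption~\ref{prop:strong_equiangular} with $m = 3$. The restricted probabilities $p_{\psi_i}(X = i) \setminus \{\phi_l\}_{l \notin \{i,j,k\}}$ and $p_{\psi_j}(X = j) \setminus \{\phi_l\}_{l \notin \{i,j,k\}}$ must agree, which after clearing denominators gives
\[
e^{c\, a_{i,j}} + e^{c\, a_{i,k}} \;=\; e^{c\, a_{j,i}} + e^{c\, a_{j,k}}.
\]
Since $a_{i,j} = a_{j,i}$ by symmetry of the inner product, this collapses to $a_{i,k} = a_{j,k}$. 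Fixing $k$ and varying $i, j \neq k$ then shows all off-diagonal cosines with second index $k$ share a common value $\alpha_k$. Finally, for $k \neq l$ pick any third index and repeat the triple argument to obtain $\alpha_k = a_{l,k} = a_{k,l} = \alpha_l$, so a single constant $\alpha$ governs every pairwise cosine, completing the proof.

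The main obstacle is identifying the right level of restriction in Assumption~\ref{prop:strong_equiangular}. Using only the full $m = d$ identity yields one equation per pair of indices and does not immediately isolate individual cosines; it is the $m = 3$ restriction that gives a clean cancellation between a shared angle and a free one, which is what delivers the equiangular property. Once the restriction to triples is in place, the remainder of the argument is algebraic and requires no further analytic machinery.
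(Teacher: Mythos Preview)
Your proof is correct and follows essentially the same route as the paper: both arguments invoke Lemma~\ref{lem:equal_norm} for equal norms, use Assumption~\ref{prop:psi} to fix $\psi_i=\phi_i/\|\phi_i\|$, and then reduce via Assumption~\ref{prop:strong_equiangular} to three-element subsets where a direct comparison of denominators forces all pairwise cosines to coincide. The only cosmetic difference is that the paper works out the $d=3$ base case explicitly (there the equality of restricted probabilities is the lemma hypothesis itself rather than Assumption~\ref{prop:strong_equiangular}, which is stated for $m<d$) before invoking the reduction for $d>3$, whereas you present the triple argument uniformly; your version is slightly more streamlined but not materially different.
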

\begin{proof}
    This implication is trivially true when $d=2$ (see the proof of \autoref{lem:equal_norm}), and we show it is similarly true when $d=3$. Following the steps in the proof of \autoref{lem:equal_norm}, we arrive at the following \num{3} pairs of equalities:
    \begin{align*}
        (1) \text{ } e^{\norm{\phi_1}(1 + \cos(\theta_{1,2}))} + e^{\norm{\phi_1} + \norm{\phi_3}\cos(\theta_{2,3})} &= e^{\norm{\phi_2}(1+\cos(\theta_{1,2}))} + e^{\norm{\phi_{2}} + \norm{\phi_3}\cos(\theta_{1,3})}\\
        (2) \text{ } e^{\norm{\phi_1}(1 + \cos(\theta_{1,3}))} + e^{\norm{\phi_1} + \norm{\phi_2}\cos(\theta_{2,3})} &= e^{\norm{\phi_3}(1+\cos(\theta_{1,3}))} + e^{\norm{\phi_{3}} + \norm{\phi_2}\cos(\theta_{1,3})}\\
        (3) \text{ } e^{\norm{\phi_2}(1 + \cos(\theta_{2,3}))} + e^{\norm{\phi_2} + \norm{\phi_1}\cos(\theta_{1,3})} &= e^{\norm{\phi_3}(1+\cos(\theta_{2,3}))} + e^{\norm{\phi_{3}} + \norm{\phi_1}\cos(\theta_{1,2})}
    \end{align*}
    From \autoref{lem:equal_norm}, $p_{\psi_1}(X=1) = p_{\psi_2}(X=2) = p_{\psi_3}(X=3) \Rightarrow \norm{\phi_1} = \norm{\phi_2} = \norm{\phi_3}$, so the above pairs of equalities reduce to:
    \begin{align*}
        (1) \text{ } \cos(\theta_{2,3}) &= \cos(\theta_{1,3})\\
        (2) \text{ } \cos(\theta_{2,3}) &= \cos(\theta_{1,3})\\
        (3) \text{ } \cos(\theta_{1,3}) &= \cos(\theta_{1,2})
    \end{align*}
    and when $d=3$, $\{\phi_j\}_{j=1}^3$ are equiangular.

    Suppose $d > 3$ and $p_{\psi_1}(X=1) = ... = p_{\psi_d}(X=d)$. By \autoref{prop:strong_equiangular}, all $m-$combinations $\binom{d}{m}$ of $\{p_{\psi_1}(X=1), ..., p_{\psi_d}(X=d)\}$ are equal. This implies all 3-combinations are equal: $p_{\psi_i}(X=i) = p_{\psi_j}(X=j) = p_{\psi_k}(X=k) \Rightarrow \theta_{i,j} = \theta_{i,k} = \theta_{j,k}$. Therefore, all angles are equal $\theta_{i,j} = \theta_{l,m}$ for $1 \leq i,j,l,m \leq d$.
\end{proof}
\begin{proof}[Proof of \autoref{lem:2}]
    $(\Rightarrow)$ Suppose $p_{\psi_1}(X=1) = p_{\psi_2}(X=2) = ... = p_{\psi_d}(X=d)$. 

    By \autoref{lem:equal_norm} and \autoref{lem:equiangular}, $p_{\psi_1}(X=1) = p_{\psi_2}(X=2) = ... = p_{\psi_d}(X=d) \Rightarrow \{\phi_j\}_{j=1}^d$ are equiangular and equal norm.

    $(\Leftarrow)$ Suppose $\{\phi_j\}_{j=1}^d$ are equiangular and equal norm. Let $\norm{\phi}$ be the norm of any vector in our set and $\cos(\theta)$ be the pairwise angle between any two vectors. Then
    \begin{align*}
        p_{\psi_i}(X=i) = \frac{e^{\norm{\phi}}}{e^{\norm{\phi}} + (d-1) e^{\norm{\phi}\cos(\theta)}} = p_{\psi_j}(X=j)
    \end{align*}
    for any $1 \leq i,j \leq d$.
\end{proof}

\begin{lemma}\label{lem:3}
    For a set of equiangular and equal norm vectors, maximum equiangularity maximizes $\sum\limits_{j} p_{\psi_j}(X=j)$.
\end{lemma}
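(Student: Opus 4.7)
The plan is to reduce the sum $\sum_j p_{\psi_j}(X=j)$ to a one-variable function of the common cosine $\alpha := \cos(\theta)$, show this function is strictly decreasing in $\alpha$, and then invoke the standard lower bound $\alpha \geq -1/(d-1)$ for equiangular equal-norm configurations. Equality in that bound is exactly the \etfs condition, so maximum equiangularity maximizes the sum.

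First I would use \autoref{prop:psi} to fix $\psi_i = \phi_i/\norm{\phi_i}$, so the angle between $\psi_i$ and $\phi_i$ is $0$ and, by equiangularity and equal norm, the angle between $\psi_i$ and $\phi_k$ for $k \neq i$ has cosine $\alpha$. Let $r := \norm{\phi}$ denote the common norm. Then
\begin{equation*}
p_{\psi_i}(X=i) = \frac{e^{r}}{e^{r} + (d-1)\,e^{r\alpha}},
\end{equation*}
which is independent of $i$, so
\begin{equation*}
\sum_{j=1}^d p_{\psi_j}(X=j) = \frac{d\,e^{r}}{e^{r} + (d-1)\,e^{r\alpha}} =: f(\alpha).
\end{equation*}
A direct differentiation gives $f'(\alpha) = -\frac{d(d-1)\,r\,e^{r}\,e^{r\alpha}}{\big(e^{r} + (d-1)e^{r\alpha}\big)^2} < 0$, so $f$ is strictly decreasing in $\alpha$. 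Hence maximizing the sum is equivalent to minimizing $\alpha$ over all admissible equiangular equal-norm configurations of $d$ vectors.

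Next I would bound $\alpha$ from below using the Gram matrix. Without loss of generality rescale to $r = 1$; then the Gram matrix $G$ of $\{\phi_j\}_{j=1}^d$ equals $(1-\alpha) I_d + \alpha\,\mathds{1}\mathds{1}^T$. Its eigenvalues are $1-\alpha$ with multiplicity $d-1$ and $1 + (d-1)\alpha$ with multiplicity $1$. Since $G$ must be positive semidefinite (it is a Gram matrix),
\begin{equation*}
1 + (d-1)\alpha \geq 0 \quad\Longleftrightarrow\quad \alpha \geq \frac{-1}{d-1},
\end{equation*}
with equality attainable precisely when the vectors form an \etfs (by \autoref{def:elmes}). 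Substituting $\alpha = -1/(d-1)$ into $f$ therefore gives the maximum of $\sum_j p_{\psi_j}(X=j)$, which proves the claim.

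I expect no single step to be a serious obstacle; the only subtlety is making sure the admissible set of configurations is correctly characterized. The closed form of $p_{\psi_i}(X=i)$ relies on \autoref{prop:psi} and on the equal-norm/equiangular hypothesis, and the PSD argument for the $-1/(d-1)$ bound is standard and matches the cosine identity noted earlier in \autoref{prop:immersion_1}. The only place one must be slightly careful is verifying that the ambient dimension $d+k$ is sufficient to realize $\alpha = -1/(d-1)$; this follows immediately because a regular $(d-1)$-simplex embedded via $\imath_{can}$ into $\mR^{d+k}$ already attains the bound, so the maximum is indeed achieved within the class of admissible \etfs configurations.
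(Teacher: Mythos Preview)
Your proof is correct and follows the same core idea as the paper: write each $p_{\psi_i}(X=i)$ in the common closed form $\frac{e^{r}}{e^{r}+(d-1)e^{r\alpha}}$ and observe that this is decreasing in the shared cosine $\alpha=\cos(\theta)$, so pushing $\alpha$ as low as possible maximizes the sum. The paper's own proof stops at the monotonicity observation, loosely noting that increasing the angle decreases $\cos(\theta)$ and hence increases each term.

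Where you differ is in rigor on the extremal step. You explicitly differentiate to establish strict monotonicity and, more substantively, you supply the Gram--matrix positive--semidefiniteness argument to pin down the sharp lower bound $\alpha \geq -1/(d-1)$ and check that it is attained by an \etf. The paper simply appeals to ``maximum equiangularity'' without proving what the minimal admissible cosine is for $d$ equiangular equal-norm vectors (its remark that ``the maximum pairwise angle between two vectors in $\mR^d$ is $\pi$'' is only tight for $d=2$). So your route is the same in spirit but strictly more complete: the PSD argument closes a gap the paper leaves implicit in its definition of \etf.
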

\begin{proof}
    The maximum pairwise angle between two vectors in $\mR^d$ is $\pi$, and from \autoref{lem:2}
    \begin{align*}
        p_{\psi_i}(X=i) = p_{\psi_j}(X=j) &= \frac{e^{\norm{\phi}}}{e^{\norm{\phi}} + (d-1) e^{\norm{\phi}\cos(\theta)}}
    \end{align*}
    for all $1 \leq i,j \leq d$. Increasing the angle $\theta$ decreases $\cos(\theta)$. Decreasing $\cos(\theta)$ only decreases the denominator, which in turn, increases $p_{\psi_i}(X=i)$. Therefore, maximizing the pairwise angle between all vectors maximizes $p_{\psi_i}(X=i)$ for all $1 \leq i \leq d$.
\end{proof}
\subsubsection{An \etfs minimizes $H_{\psi_i}(X)$}

\begin{proof}[Proof of Lemma~\ref{lem:entropy}]
    Equal norm and equiangular $\{\phi_j\}_{j=1}^d$ are bounded in norm, and thus, the set of probability distributions we obtain $\{p_{\psi_i}(1), p_{\psi_i}(2), ..., p_{\psi_i}(d)\}$ belong to a capped simplex~\citep{warmuth2008randomized} $\Delta^d_{c} = \{p \in \Delta\vert \max_k p_{\psi_i}(k) \leq c\}$ where $c = \frac{e^{\norm{\phi}^2}}{e^{\norm{\phi}^2} + (d - 1)e^{\norm{\phi}^2\cos(\theta)}}$. Clearly, among such probability vectors, the minimum entropy is achieved at the boundary where $\cos(\theta)$ is minimized, i.e., when the $\{\phi_j\}_{j=1}^d$ are maximally equiangular.
\end{proof}

\subsubsection{An \etfs Maintains Permutation Invariance}
\begin{proof}[Proof of \autoref{lem:invariance}]
    This follows from row-wise equivariance to permutations in matrix multiplication. For any permutation $\pi: [1, \dots, n] \rightarrow [1, \dots, n]$ applied to the rows of $\mathcal{S}^n$, we have $\pi(\mathcal{S})W = \pi(\mathcal{S}W)$.
\end{proof}

{\color{myred}
\section{Experimental Settings}\label{sec:experimental_settings}

In this section, we describe our experimental settings, and further, we direct readers interested in reproducing or using any of the methods we benchmark in this work to our released code. Unless stated otherwise, all \textit{universal meta-learning} baselines use a CLIP feature extractor to encode images.

\textbf{Large-Scale Pre-Training.} All methods evaluated in the \textit{universal meta-learning} setting adhere to the same pre-training paradigm. For each large-scale image classification dataset, we reformulate the objective from typical supervised image classification to both a 5-way-1-shot and a 5-way-5-shot episodic prediction tasks. Within a dataset, examples from different classes are randomly sampled to compose a batch of episodes, and after exhausting iterating through every training example, this process is repeated with the next dataset. Iterating through each dataset in our set of ImageNet-1k, Fungi, MSCOCO, and WikiArt then constitutes a single epoch of training.

\textbf{ProtoNet and MetaOpt Implementations.} For the ProtoNet and MetaOpt algorithms, we evaluate two settings. The first freezes the CLIP backbone and then applies the metric-learning objective---cosine distance for ProtoNet and SVM for MetaOpt---to classify the query image from the unmodified CLIP embeddings. The second emulates P>M>F~\cite{pmf} by fine-tuning the CLIP backbone during large-scale pre-training with the metric-learning objective function. During inference, the metric-learning objective is applied to the fine-tuned CLIP embeddings to classify query images.

\textbf{MetaQDA Implementation.} We follow the MetaQDA algorithm presented in \citet{metaqda}. Specifically, we freeze the CLIP feature extractor backbone and train the MetaQDA classifier during large-scale episodic pre-training.

\textbf{SNAIL Implementation.} We use the architecture presented in \citet{snail} but with the hidden dimension of the Attention and Temporal Convolution Blocks adapted to CLIP embeddings rather than the ResNet embeddings used in the original implementation. As in this \citet{snail}, we freeze the CLIP feature extractor and train the SNAIL model parameters during large-scale pre-training.

\textbf{GPICL Implementation.} We adapt the GPICL algorithm presented by \citet{gpicl} for episodic meta-training with an \etfs label encoder. Specifically, we represent image feature vectors as CLIP embeddings and the label embeddings with an \etf. Following \citet{gpicl}, we form a sequence by concatening the current CLIP image embedding \textit{with the previous} example's \etfs label embedding and add learnable positional embeddings so the model can use positional information of elements in the sequence to classify the query point in a causal-like fashion. We set the General-Purpose In-Context Learning Transformer model to a ViT-Large~\citep{vit} with leranable positional embeddings.

\textbf{\method Implementation.} The image encoder is set to CLIP and the label encoder is an \etf. For the non-causal sequence model, we use a ViT-Large as described in Table 1 of \citet{vit}. This size is chosen as it has a hidden dimension of \num{1024} and the CLIP output embedding vectors have hidden dimension of size \num{768}. Choosing a non-causal sequence model with a large hidden dimension allows us to concatenate the label embedding to the CLIP embedding; in this case, the label embedding is a \num{256} dimensional \etf. In total, the implementation of \method used for empirical evaluation has \num{302} million trainable parameters.

\textbf{Optimization Settings.} Following the recommendation of training Vision Transformers~\citep{train_vit} as well as standard practices, all universal meta-learning approaches use a cosine learning rate schedule with \num{9600} warmup steps increasing linearly from \num{0} to $1$e-$5$ followed by cosine decay to $1$e$-6$ over the subsequent \num{360000} steps. Given the size of our pre-training datasets, we do not apply dropout, attention dropout, or weight decay regularization. We select a batch size of \num{525} so the 5-way-1-shot episodes contain \num{520} query predictions and the 5-way-5-shot episodes contain \num{500} query predictions. Given the scale of the pre-training datasets---and the computation to train a single model---we do not conduct any hyperparameter tuning.

\textbf{P>M>F Meta-Training.} We follow the settings used by \citet{pmf} to evaluate P>M>F. Specifically, P>M>F uses a DINO~\citep{dino} feature extractor rather than a CLIP feature extractor as the authors of P>M>F found a DINO feature extractor to be preferrable. We refer readers \citet{pmf} for this comparison. For meta-training, we use the code released by \citet{pmf} and simply switch out the datasets to evaluate the In-Domain setting. Both the in-domain and universal meta-learning settings use the same test-set data; the difference is that P>M>F meta-trains on each training dataset before evaluating on the testing dataset of each benchmark.
}

\section{Supplementary Analysis}
\label{appx:ablation}
\begin{table*}[t!]
  \centering
  \small
  \caption{\textbf{MiniImageNet \& CIFAR-fs} mean accuracy and standard error across \num{10000} test epochs.}
  \vspace{-0.2cm}
  \label{tab:ab_fs_img_1}
  \resizebox{0.7\linewidth}{!}{%
  \begin{tabular}{l|cccc}
    \toprule
      \multirow{2}{*}{\begin{tabular}{l}Method \end{tabular}} & \multicolumn{2}{c}{CIFAR-fs} & \multicolumn{2}{c}{MiniImageNet} \\ 
      \cmidrule(lr){2-3}\cmidrule(lr){4-5}
      & \multicolumn{1}{c}{5w-1s} & \multicolumn{1}{c}{5w-5s} & \multicolumn{1}{c}{5w-1s} & \multicolumn{1}{c}{5w-5s} \\ 
      \midrule
      \method [\etfs Class Embedding]  & \textbf{70.8}$\mathbf{\pm.2}$ & \textbf{85.5}$\mathbf{\pm.1}$ & \textbf{96.2}$\mathbf{\pm.1}$ & \textbf{98.6}$\mathbf{\pm.0}$  \\
      \method [Learnable Class Embedding] & \textbf{71.1}$\mathbf{\pm.2}$ & \textbf{85.9}$\mathbf{\pm.1}$ & \textbf{96.1}$\mathbf{\pm.1}$ & \textbf{98.7}$\mathbf{\pm.0}$  \\
    \bottomrule
  \end{tabular}
}
\end{table*}

\begin{table}[t!]
  \centering
  \small
  \caption{\textbf{CUB \& tiered-ImageNet \& Aircraft}  mean accuracy and standard error across \num{10000} test epochs.}
  \vspace{-0.2cm}
  \label{tab:ab_fs_img_2}
  \resizebox{\linewidth}{!}{%
  \begin{tabular}{l|cccccc}
    \toprule
      \multirow{2}{*}{\begin{tabular}{l}Method \end{tabular}} & \multicolumn{2}{c}{CUB} & \multicolumn{2}{c}{tiered-ImageNet} &\multicolumn{2}{c}{Aircraft} \\ 
      \cmidrule(lr){2-3}\cmidrule(lr){4-5}\cmidrule(lr){6-7}
      & \multicolumn{1}{c}{5w-1s} & \multicolumn{1}{c}{5w-5s} & \multicolumn{1}{c}{5w-1s} & \multicolumn{1}{c}{5w-5s} & \multicolumn{1}{c}{5w-1s} & \multicolumn{1}{c}{5w-5s}\\ 
      \midrule
      \method [\etfs Class Embedding]  & \textbf{91.8}$\mathbf{\pm.2}$ & \textbf{97.1}$\mathbf{\pm.1}$ & \textbf{95.4}$\mathbf{\pm.1}$ & \textbf{98.1}$\mathbf{\pm.1}$ & 63.3$\pm.3$ & 79.1$\pm.2$ \\
      \method [Learnable Class Embedding]  & \textbf{91.8}$\mathbf{\pm.2}$ & \textbf{97.1}$\mathbf{\pm.1}$ & \textbf{95.3}$\mathbf{\pm.1}$ & \textbf{98.3}$\mathbf{\pm.1}$ & \textbf{66.3}$\mathbf{\pm.2}$ & \textbf{80.6}$\mathbf{\pm.2}$ \\
    \bottomrule
  \end{tabular}
}
\end{table}

\begin{table*}[t!]
  \centering
  \small
  \caption{\textbf{Pascal \& Paintings} mean accuracy and standard error across \num{10000} test epochs.}
  \vspace{-0.2cm}
  \label{tab:ab_fs_img_3}
  \resizebox{\linewidth}{!}{%
  \begin{tabular}{l|cccccc}
    \toprule
      \multirow{2}{*}{\begin{tabular}{l}Method \end{tabular}} & \multicolumn{2}{c}{Pascal + Paintings} & \multicolumn{2}{c}{Paintings} &\multicolumn{2}{c}{Pascal} \\ 
      \cmidrule(lr){2-3}\cmidrule(lr){4-5}\cmidrule(lr){6-7}
      & \multicolumn{1}{c}{5w-1s} & \multicolumn{1}{c}{5w-5s} & \multicolumn{1}{c}{5w-1s} & \multicolumn{1}{c}{5w-5s} & \multicolumn{1}{c}{5w-1s} & \multicolumn{1}{c}{5w-5s}\\ 
      \midrule
      \method [\etfs Class Embedding] & \textbf{63.8}$\mathbf{\pm.2}$ & \textbf{78.3}$\mathbf{\pm.1}$ & \textbf{51.1}$\mathbf{\pm.2}$ & \textbf{65.2}$\mathbf{\pm.1}$ & \textbf{82.6}$\mathbf{\pm.2}$ & \textbf{89.7}$\mathbf{\pm.1}$ \\
      \method [Learnable Class Embedding] & 63.1$\pm.2$ & \textbf{78.0}$\mathbf{\pm.1}$ & \textbf{51.3}$\mathbf{\pm.2}$ & \textbf{65.0}$\mathbf{\pm.1}$ & \textbf{82.1}$\mathbf{\pm.2}$ & \textbf{89.7}$\mathbf{\pm.1}$ \\
    \bottomrule
  \end{tabular}
}
\end{table*}

\begin{table*}[t!]
  \centering
  \small
  \caption{\textbf{meta-iNat \& tiered meta-iNat \& ChestX} mean accuracy and standard error across \num{10000} test epochs.}
  \vspace{-0.2cm}
  \label{tab:ab_fs_img_4}
  \resizebox{\linewidth}{!}{%
  \begin{tabular}{l|cccccc}
    \toprule
      \multirow{2}{*}{\begin{tabular}{l}Method\end{tabular}} & \multicolumn{2}{c}{meta-iNat} & \multicolumn{2}{c}{tiered meta-iNat} &\multicolumn{2}{c}{ChestX} \\ 
      \cmidrule(lr){2-3}\cmidrule(lr){4-5}\cmidrule(lr){6-7}
      & \multicolumn{1}{c}{5w-1s} & \multicolumn{1}{c}{5w-5s} & \multicolumn{1}{c}{5w-1s} & \multicolumn{1}{c}{5w-5s} & \multicolumn{1}{c}{5w-1s} & \multicolumn{1}{c}{5w-5s}\\ 
      \midrule
      \method [\etfs Class Embedding] & \textbf{91.2}$\mathbf{\pm.2}$ & \textbf{96.3}$\mathbf{\pm.1}$ & \textbf{81.9}$\mathbf{\pm.2}$ & \textbf{91.6}$\mathbf{\pm.1}$ & \textbf{21.5}$\mathbf{\pm.1}$ & 22.2$\pm.1$ \\
      \method [Learnable Class Embedding] & \textbf{91.4}$\mathbf{\pm.2}$ & \textbf{96.4}$\mathbf{\pm.1}$ & \textbf{82.1}$\mathbf{\pm.2}$ & \textbf{91.8}$\mathbf{\pm.1}$ & \textbf{21.5}$\mathbf{\pm.1}$ & \textbf{22.6}$\mathbf{\pm.1}$ \\
    \bottomrule
  \end{tabular}
}
\end{table*}

\textbf{\etf~Ablation.} To supplement our theoretical analysis in \Cref{sec:theory}, we train a version of \method with learnable class embedding vectors in place of the fixed \etfs encoder. Given our analysis in \Cref{sec:theory}, it is perhaps unsurprising we find that---without any constraints or limitations---the class embeddings converge to an \etf. The average pair-wise angle between embedding vectors is $1.77 \pm 0.02$ radians whereas the expected pairwise angle from an \etfs is $1.82$. Similarly, the average norm of the learnable class embeddings converges to $1.34 \pm 0.02$ whereas the learned norm of the \etfs model is $1.32$. 

An evaluation comparing \method with learnable class embeddings to the approach with a fixed \etfs encoder is presented in \autoref{tab:ab_fs_img_1}, \autoref{tab:ab_fs_img_2}, \autoref{tab:ab_fs_img_3}, and \autoref{tab:ab_fs_img_4} of the Appendix. In summary, the performance is approximately the same on each benchmark with the exception of Aircraft. In this case, the learnable embedding model significantly outperforms the \etfs model, and moreover, surpasses all other universal meta-learning baselines on the 5-way-1-shot split with an accuracy of $66.3\pm.2$. Nevertheless, given the similarity between both approaches on the remaining \num{10} datasets, and the learnable class embeddings actually forming an \etf, we attribute the difference in Aircraft performance to stochasticity in training the model, suggesting that the fixed ELMES encoder is indeed optimal.

\begin{table*}[t!]
  \centering
  \small
  \caption{\textbf{MiniImageNet \& CIFAR-fs} mean accuracy and standard error across \num{10000} test epochs. $\circ$ indicates mean and standard error across \num{2500} test epochs.}
  \label{tab:ab_img_1}
  \resizebox{0.7\linewidth}{!}{%
  \begin{tabular}{l|cccc}
    \toprule
      \multirow{2}{*}{\begin{tabular}{l}Method \end{tabular}} & \multicolumn{2}{c}{CIFAR-fs} & \multicolumn{2}{c}{MiniImageNet} \\ 
      \cmidrule(lr){2-3}\cmidrule(lr){4-5}
      & \multicolumn{1}{c}{5w-1s} & \multicolumn{1}{c}{5w-5s} & \multicolumn{1}{c}{5w-1s} & \multicolumn{1}{c}{5w-5s} \\ 
      \midrule
      \method (ResNet34) & $61.8\pm.2$ & $79.4\pm.2$ & $94.7\pm.1$ & $98.1\pm.0$ \\
      \method (ViT-base) & 70.8$\pm.2$ & 85.5$\pm.1$ & 96.2$\pm.1$ & 98.6$\pm.0$  \\
      \method (ViT-huge)$^\circ$ & \textbf{83.3}$\mathbf{\pm.4}$ & \textbf{93.5}$\mathbf{\pm.2}$ & \textbf{98.6}$\mathbf{\pm.1}$ & \textbf{99.6}$\mathbf{\pm.0}$  \\
    \bottomrule
  \end{tabular}
}
\end{table*}

\begin{table}[t!]
  \centering
  \small
  \caption{\textbf{CUB \& tiered-ImageNet \& Aircraft}  mean accuracy and standard error across \num{10000} test epochs. $\circ$ indicates mean and standard error across \num{2500} test epochs.}
  \vspace{-0.2cm}
  \label{tab:ab_img_2}
  \resizebox{\linewidth}{!}{%
  \begin{tabular}{l|cccccc}
    \toprule
      \multirow{2}{*}{\begin{tabular}{l}Method \end{tabular}} & \multicolumn{2}{c}{CUB} & \multicolumn{2}{c}{tiered-ImageNet} &\multicolumn{2}{c}{Aircraft} \\ 
      \cmidrule(lr){2-3}\cmidrule(lr){4-5}\cmidrule(lr){6-7}
      & \multicolumn{1}{c}{5w-1s} & \multicolumn{1}{c}{5w-5s} & \multicolumn{1}{c}{5w-1s} & \multicolumn{1}{c}{5w-5s} & \multicolumn{1}{c}{5w-1s} & \multicolumn{1}{c}{5w-5s}\\ 
      \midrule
      \method (ResNet34)  & $75.4\pm.2$ & $88.3\pm.1$ & $96.1\pm.1$ & $98.5\pm.0$ & $45.1\pm.2$ & $58.7\pm.2$ \\
      \method (ViT-base)  & 91.8$\pm.2$ & 97.1$\pm.1$ & 95.4$\pm.1$ & 98.1$\pm.1$ & 63.3$\pm.3$ & 79.1$\pm.2$ \\
      \method (ViT-huge)$^\circ$  & \textbf{95.8}$\mathbf{\pm.2}$ & \textbf{98.7}$\mathbf{\pm.1}$ & \textbf{96.8}$\mathbf{\pm.2}$ & \textbf{98.8}$\mathbf{\pm.1}$ & \textbf{81.8}$\mathbf{\pm.4}$ & \textbf{92.1}$\mathbf{\pm.3}$ \\
    \bottomrule
  \end{tabular}
}
\end{table}

\begin{table*}[t!]
  \centering
  \small
  \caption{\textbf{Pascal \& Paintings} mean accuracy and standard error across \num{10000} test epochs. $\circ$ indicates mean and standard error across \num{2500} test epochs.}
  \vspace{-0.2cm}
  \label{tab:ab_img_3}
  \resizebox{\linewidth}{!}{%
  \begin{tabular}{l|cccccc}
    \toprule
      \multirow{2}{*}{\begin{tabular}{l}Method \end{tabular}} & \multicolumn{2}{c}{Pascal + Paintings} & \multicolumn{2}{c}{Paintings} &\multicolumn{2}{c}{Pascal} \\ 
      \cmidrule(lr){2-3}\cmidrule(lr){4-5}\cmidrule(lr){6-7}
      & \multicolumn{1}{c}{5w-1s} & \multicolumn{1}{c}{5w-5s} & \multicolumn{1}{c}{5w-1s} & \multicolumn{1}{c}{5w-5s} & \multicolumn{1}{c}{5w-1s} & \multicolumn{1}{c}{5w-5s}\\ 
      \midrule
      \method (ResNet34) & $57.5\pm.2$ & $71.0\pm.1$ & $46.1\pm.2$ & $57.3\pm.1$ & $77.4\pm.2$ & $86.8\pm.1$ \\
      \method (ViT-base) & 63.8$\pm.2$ & 78.3$\pm.1$ & 51.1$\pm.2$ & 65.2$\pm.1$ & 82.6$\pm.2$ & 89.7$\pm.1$ \\
      \method (ViT-huge)$^\circ$ & \textbf{66.4}$\mathbf{\pm.4}$ & \textbf{81.0}$\mathbf{\pm.2}$ & \textbf{54.7}$\mathbf{\pm.3}$ & \textbf{69.9}$\mathbf{\pm.2}$ & \textbf{83.4}$\pm.4$ & \textbf{90.1}$\mathbf{\pm.3}$ \\
    \bottomrule
  \end{tabular}
}
\end{table*}

\begin{table*}[t!]
  \centering
  \small
  \caption{\textbf{meta-iNat \& tiered meta-iNat \& ChestX} mean accuracy and standard error across \num{10000} test epochs. $\circ$ indicates mean and standard error across \num{2500} test epochs.}
  \vspace{-0.2cm}
  \label{tab:ab_img_4}
  \resizebox{\linewidth}{!}{%
  \begin{tabular}{l|cccccc}
    \toprule
      \multirow{2}{*}{\begin{tabular}{l}Method\end{tabular}} & \multicolumn{2}{c}{meta-iNat} & \multicolumn{2}{c}{tiered meta-iNat} &\multicolumn{2}{c}{ChestX} \\ 
      \cmidrule(lr){2-3}\cmidrule(lr){4-5}\cmidrule(lr){6-7}
      & \multicolumn{1}{c}{5w-1s} & \multicolumn{1}{c}{5w-5s} & \multicolumn{1}{c}{5w-1s} & \multicolumn{1}{c}{5w-5s} & \multicolumn{1}{c}{5w-1s} & \multicolumn{1}{c}{5w-5s}\\ 
      \midrule
      \method (ResNet34) & $82.4\pm.2$ & $91.4\pm.1$ & $72.3\pm.2$ & $84.6\pm.2$ & $\mathbf{21.8\pm.1}$ & $\mathbf{23.6\pm.1}$ \\
      \method (ViT-base) & 91.2$\pm.2$ & 96.3$\pm.1$ & 81.9$\pm.2$ & 91.6$\pm.1$ & \textbf{21.5}$\mathbf{\pm.1}$ & 22.2$\pm.1$ \\
      \method (ViT-huge)$^\circ$  & \textbf{94.6}$\mathbf{\pm.3}$ & \textbf{97.9}$\mathbf{\pm.1}$ & \textbf{89.3}$\mathbf{\pm.4}$ & \textbf{95.6}$\mathbf{\pm.2}$ & \textbf{21.6}$\mathbf{\pm.2}$ & 22.0$\pm.2$ \\
    \bottomrule
  \end{tabular}
}
\end{table*}

{\color{myred} 
\textbf{Image Encoder Ablation.} To evaluate how the performance of \method is affected by the pre-trained image encoder, we evaluate \method with a ResNet-34 image encoder pre-trained on ImageNet-1k, a ViT-base image encoder pre-trained with CLIP, and a ViT-huge image encoder that is pre-trained on Laion-2b~\citep{laion}. We use the open source models released by Hugging Face in our evaluation. 

As indicated in \autoref{tab:ab_img_1}, \autoref{tab:ab_img_2}, \autoref{tab:ab_img_3}, and \autoref{tab:ab_img_4}, the performance of \method scales with the strength of the feature extractor. Specifically, the performance with a ResNet-34 feature extractor is significantly worse than the performance with a CLIP ViT-base feature extractor, and in turn, the performance with a CLIP ViT-base is significantly worse than the performance with a Laion-2b ViT-huge feature extractor. However, its unclear what facet of the improved feature extractor is relevant for \method, especially on out-of-distribution tasks like Aircraft where the most benefit is seen. Moreover, it is unclear why there is no improvement on another out-of-distribution dataset, ChestX. 

To investigate this dimension, we visualize the image embeddings of both Aircraft and ChestX using t-sne~\citep{tsne} dimensionality reduction. \autoref{fig:tsne_plots} visualizes these embeddings colored by class identity. We find the ViT-huge model pre-trained on Laion-2b better separates the Aircraft dataset than the ViT-base model pre-trained using the CLIP objective; however, both models do not reasonably separate ChestX. We postulate that an image encoder that can capture the axes of variability among image embeddings is crucial for strong \method performance, and the reason we observe significantly improved results on Aircraft but not ChestX when using a Laion-2b ViT-h image encoder.

Taken together, these results indicate \method is modular: as foundational model feature extractors continue to improve, \method will be able to capture these advances to improve its own performance. 

\begin{figure*}[t] 
    \centering
    \includegraphics[width=\textwidth]{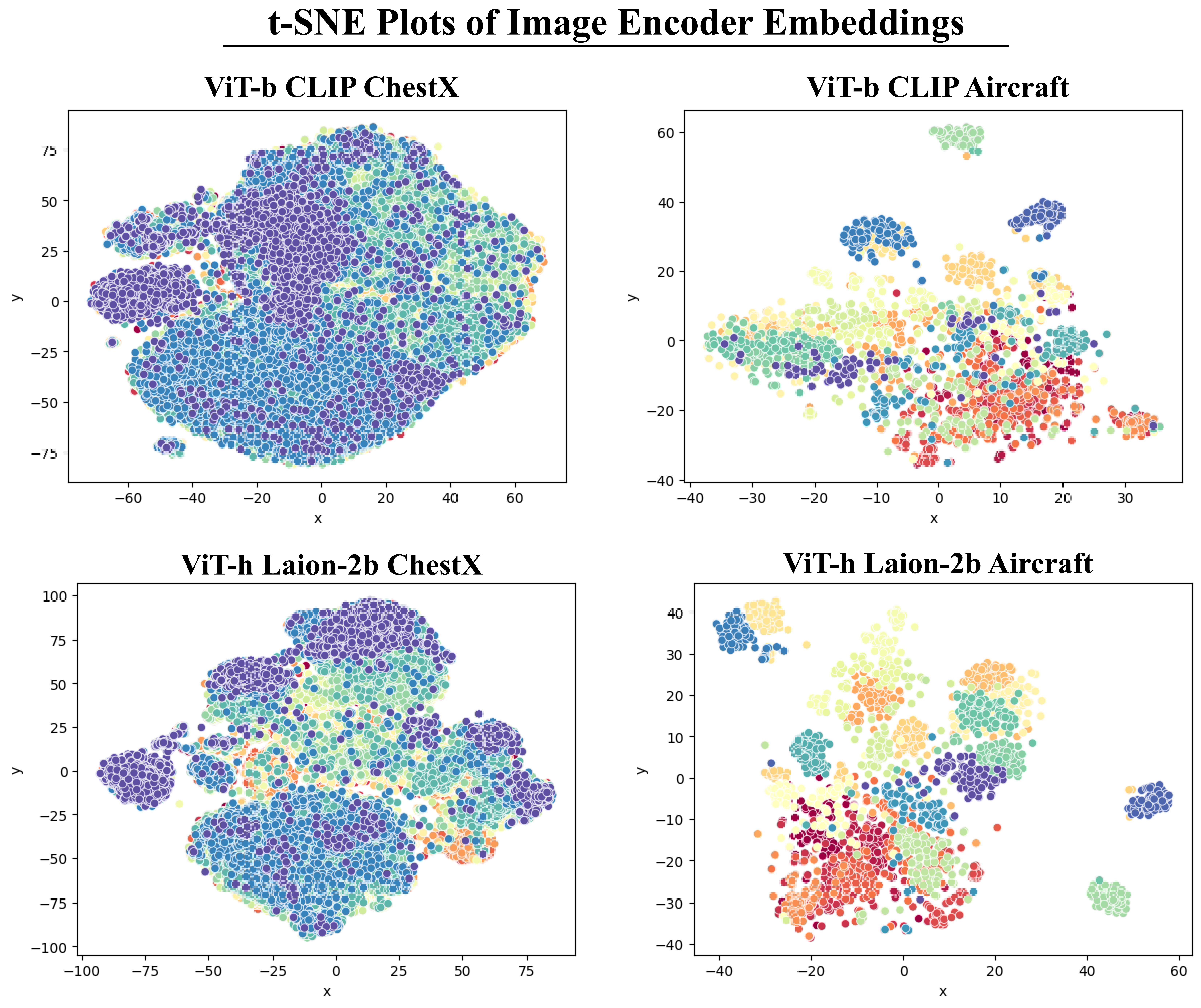}
    \vspace{-0.2cm}
    \caption{\footnotesize{t-SNE projections of different image embeddings of various benchmark datasets with embeddings colored class identity. We see ViT-huge trained with Laion-2b better separates the Aircraft dataset than does ViT-base trained with CLIP. However, both image encoders are unable to separate ChestX.}}
    \label{fig:tsne_plots}
\end{figure*}

\begin{figure*}[t!] 
    \centering
    \includegraphics[width=\textwidth]{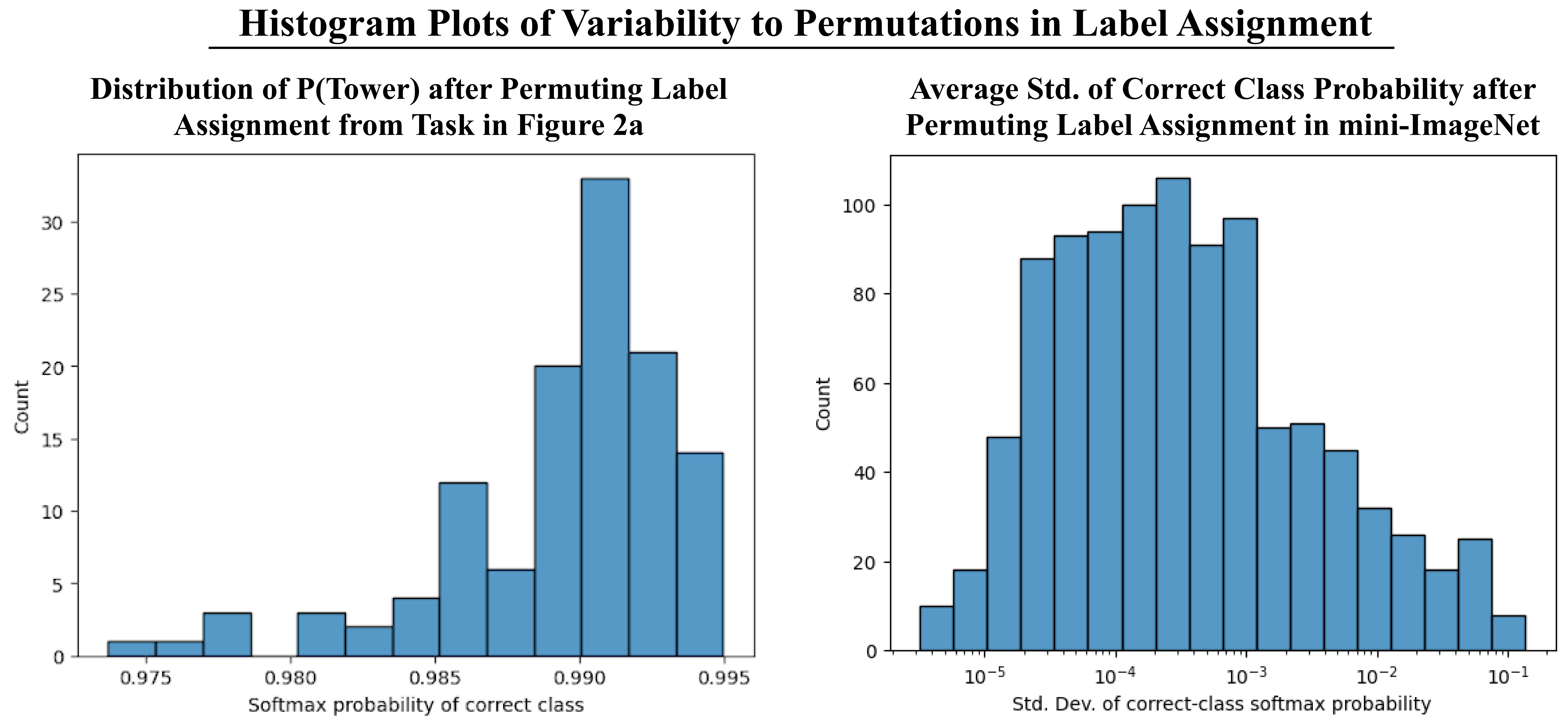}
    \caption{{\color{myred} \footnotesize{(Left) histogram of the correct class probability for the example presented in \autoref{fig:analysis1} after permuting the assignment of labels to support-set images for all \num{120} permutations of the 5-way-1-shot task. (Right) histogram of the average standard deviation of all \num{120} permutations of the 5-way-1-shot task for \num{1000} samples from mini-ImageNet.}}}
    \label{fig:label_assignment}
\end{figure*}

{\color{myred} \textbf{Assignment of Labels to Support Set Classes Analysis.} Symmetry to the assignment of labels to support set classes is a desirable property of few-shot learning algorithms. For instance, the predictions for [(bear, 1), (tower, 2), (tree, 3)] should be the same if the labels are permuted to [(bear, 3), (tower 1), (tree, 2)]. CAML is not invariant to permutations in the assignment of classes to support set examples as implied by \cref{eqn:attn} in \Cref{sec:label_symmetry}; however, we empirically find it is robust to them. Label symmetry is distinct from the permutation invariance property of \method that is discussed in \Cref{sec:permutation_invariance}. Tangibly for the sequence [(bear, 1), (tower, 2), (tree, 3)], permutation invariance ensures the predictions are the same as if the order of demonstrations is permuted to [(tower, 2), (tree, 3), (bear, 1)].

In \autoref{fig:label_assignment}(left), we visualize the histogram of the correct class probability for the example presented in \autoref{fig:analysis1} after permuting the assignment of labels to support-set images for all 120 permutations of the 5-way-1-shot task. In \autoref{fig:label_assignment}(right), we visualize the average standard deviation of all 120 permutations of the 5-way-1-shot task for \num{1000} samples from mini-ImageNet. The mean of this statistic is $0.004 \pm 0.0004$. Taken together, this indicates \method is empirically robust to permutations in the assignment of labels to support set classes. 
}

{\color{myred}
\section{Discussion}
\textbf{Weaknesses of \emethod.}
Despite its strong empirical performance, \method presents several weaknesses. First, the maximum number of classes present in the support set at any point during inference must be known at pre-training to instantiate a $d$-way \etf. Further, at least one dataset during pre-training must use a $d$-way classification setting so the $\psi_i$ class detectors referenced in \Cref{sec:theory} are trained within the Transformer encoder's attention layers. 

\textbf{Why does \method not fine-tune the image encoder during pre-training?}
We do not fine-tune the image encoder because it is not advantageous for \textit{universal meta-learning}.

Our goal is to develop a meta-learning algorithm that may function in a ChatGPT-like application; it should be able to run in-context learning on any set of images. Foundational image models are trained for exactly this purpose: they are pre-trained on billions of images to form a well-structured image embedding space that is robust to augmentations, occlusions, etc. Moreover, valuable characteristics such as the presence of objects, textures, etc. of an image are encoded into the structure of the embedding space so that the axes of variability among the embeddings encode variation in specific visual attributes. 

Fine-tuning the image encoder can corrupt this embedding space; especially since the datasets we use for pre-training are orders of magnitude smaller than the ones used to train the Foundational model. This hypothesis is supported by our experiments with ProtoNet and MetaOpt in \Crefrange{tab:fs_img_1}{tab:fs_img_4}. Specifically, we find fine-tuning the backbone during pre-training leads to performance degradation on many of our benchmarks when evaluated in the universal meta-learning setting. 
}

\end{document}